\setlist[enumerate]{leftmargin=.5in}
\setlist[itemize]{leftmargin=.5in}
\crefname{hypothesis}{Hypothesis}{Hypotheses}
\crefname{fact}{Fact}{Facts}
\title{Efficient Sparsification of Simplicial Complexes via Local Densities of States\thanks{Submitted to the editors DATE.
	}}
\author{Anton Savostianov\thanks{Computational Network Science, RWTH, Aachen
		(\email{savostianov@cs.rwth-aachen.de}, \url{http://antsav.me}).}
	\and Michael T. Schaub\thanks{Computational Network Science, RWTH, Aachen
		(\email{schaub@cs.rwth-aachen.de}).}
	\and Nicola Guglielmi\thanks{Gran Sasso Science Institute, L'Aquila, Italy
		(\email{nicola.guglielmi@gssi.it}).}
	\and Francesco Tudisco\footnotemark[4] \thanks{University of Edinburgh, Edinburgh, UK
		(\email{f.tudisco@ed.ac.uk}).}}
\DeclareMathOperator{\diag}{diag}
\newcommand*{\mc}[1]{\mathcal{#1}}
\renewcommand*{\bar}[1]{\overline{#1}}
\renewcommand*{\b}[1]{\mathbf{#1}}
\newcommand*\eps{\varepsilon}
\newcommand*{\V}[1]{ \mc V_{#1}(\mc K)}
\newcommand*{\ds}[1]{\mathds{#1}}
\newcommand*{\Lu}[1]{L_{#1}^{\uparrow}}
\newcommand*{\Ld}[1]{L_{#1}^{\downarrow}}
\newcommand*{\wh}[1]{\widehat{#1}}
\renewcommand*{\bar}[1]{ \overline{#1} }
\definecolor{rwth-blue}{HTML}{00549F}
\definecolor{rwth-blue-75}{HTML}{407FB7}
\definecolor{rwth-blue-50}{HTML}{8EBAE5}
\definecolor{rwth-blue-25}{HTML}{C7DDF2}
\definecolor{rwth-blue-10}{HTML}{E8F1FA}
\definecolor{rwth-black}{HTML}{000000}
\definecolor{rwth-black-75}{HTML}{646567}
\definecolor{rwth-black-50}{HTML}{9C9E9F}
\definecolor{rwth-black-25}{HTML}{CFD1D2}
\definecolor{rwth-black-10}{HTML}{ECEDED}
\definecolor{rwth-magenta}{HTML}{E30066}
\definecolor{rwth-magenta-75}{HTML}{E96088}
\definecolor{rwth-magenta-50}{HTML}{F19EB1}
\definecolor{rwth-magenta-25}{HTML}{F9D2DA}
\definecolor{rwth-magenta-10}{HTML}{FDEEF0}
\definecolor{rwth-yellow}{HTML}{FFED00}
\definecolor{rwth-yellow-75}{HTML}{FFF055}
\definecolor{rwth-yellow-50}{HTML}{FFF59B}
\definecolor{rwth-yellow-25}{HTML}{FFFAD1}
\definecolor{rwth-yellow-10}{HTML}{FFFDEE}
\definecolor{rwth-petrol}{HTML}{006165}
\definecolor{rwth-petrol-75}{HTML}{2D7F83}
\definecolor{rwth-petrol-50}{HTML}{7DA4A7}
\definecolor{rwth-petrol-25}{HTML}{BFD0D1}
\definecolor{rwth-petrol-10}{HTML}{E6ECEC}
\definecolor{rwth-turquoise}{HTML}{0098A1}
\definecolor{rwth-turquoise-75}{HTML}{00B1B7}
\definecolor{rwth-turquoise-50}{HTML}{89CCCF}
\definecolor{rwth-turquoise-25}{HTML}{CAE7E7}
\definecolor{rwth-turquoise-10}{HTML}{EBF6F6}
\definecolor{rwth-green}{HTML}{57AB27}
\definecolor{rwth-green-75}{HTML}{8DC060}
\definecolor{rwth-green-50}{HTML}{B8D698}
\definecolor{rwth-green-25}{HTML}{DDEBCE}
\definecolor{rwth-green-10}{HTML}{F2F7EC}
\definecolor{rwth-maigrun}{HTML}{BDCD00}
\definecolor{rwth-maigrun-75}{HTML}{D0D95C}
\definecolor{rwth-maigrun-50}{HTML}{E0E69A}
\definecolor{rwth-maigrun-25}{HTML}{F0F3D0}
\definecolor{rwth-maigrun-10}{HTML}{F9FAED}
\definecolor{rwth-orange}{HTML}{F6A800}
\definecolor{rwth-orange-75}{HTML}{FABE50}
\definecolor{rwth-orange-50}{HTML}{FDD48F}
\definecolor{rwth-orange-25}{HTML}{FEEAC9}
\definecolor{rwth-orange-10}{HTML}{FFF7EA}
\definecolor{rwth-red}{HTML}{CC071E}
\definecolor{rwth-red-75}{HTML}{D85C41}
\definecolor{rwth-red-50}{HTML}{E69679}
\definecolor{rwth-red-25}{HTML}{F3CDBB}
\definecolor{rwth-red-10}{HTML}{FAEBE3}
\definecolor{rwth-bordeaux}{HTML}{A11035}
\definecolor{rwth-bordeaux-75}{HTML}{B65256}
\definecolor{rwth-bordeaux-50}{HTML}{CD8B87}
\definecolor{rwth-bordeaux-25}{HTML}{E5C5C0}
\definecolor{rwth-bordeaux-10}{HTML}{F5E8E5}
\definecolor{rwth-violet}{HTML}{612158}
\definecolor{rwth-violet-75}{HTML}{834E75}
\definecolor{rwth-violet-50}{HTML}{A8859E}
\definecolor{rwth-violet-25}{HTML}{D2C0CD}
\definecolor{rwth-violet-10}{HTML}{EDE5EA}
\definecolor{rwth-purple}{HTML}{7A6FAC}
\definecolor{rwth-purple-75}{HTML}{9B91C1}
\definecolor{rwth-purple-50}{HTML}{BCB5D7}
\definecolor{rwth-purple-25}{HTML}{DEDAEB}
\definecolor{rwth-purple-10}{HTML}{F2F0F7}
\pgfplotsset{%
	layers/standard/.define layer set={%
			background,axis background,axis grid,axis ticks,axis lines,axis tick labels,pre main,main,axis descriptions,axis foreground%
		}{
			grid style={/pgfplots/on layer=axis grid},%
			tick style={/pgfplots/on layer=axis ticks},%
			axis line style={/pgfplots/on layer=axis lines},%
			label style={/pgfplots/on layer=axis descriptions},%
			legend style={/pgfplots/on layer=axis descriptions},%
			title style={/pgfplots/on layer=axis descriptions},%
			colorbar style={/pgfplots/on layer=axis descriptions},%
			ticklabel style={/pgfplots/on layer=axis tick labels},%
			axis background@ style={/pgfplots/on layer=axis background},%
			3d box foreground style={/pgfplots/on layer=axis foreground},%
		},
}
\tikzset{
	on each segment/.style={
			decorate,
			decoration={
					show path construction,
					moveto code={},
					lineto code={
							\path [#1]
							(\tikzinputsegmentfirst) -- (\tikzinputsegmentlast);
						},
					curveto code={
							\path [#1] (\tikzinputsegmentfirst)
							.. controls
							(\tikzinputsegmentsupporta) and (\tikzinputsegmentsupportb)
							..
							(\tikzinputsegmentlast);
						},
					closepath code={
							\path [#1]
							(\tikzinputsegmentfirst) -- (\tikzinputsegmentlast);
						},
				},
		},
	mid arrow/.style={postaction={decorate,decoration={
							markings,
							mark=at position .5 with {\arrow[#1]{stealth}}
						}}},
}
\newcommand{\AxisRotator}[1][rotate=0]{%
	\tikz [x=0.15cm,y=0.15cm,line width=.2ex,-stealth,#1] \draw (0,0) arc (-150:150:1 and 1);%
}
\DeclareMathOperator{\im}{im}
\newtheorem{problem}{Problem}
\begin{document}

\maketitle

\begin{abstract}
	Simplicial complexes (SCs) have become a popular abstraction for analyzing complex data using tools from topological data analysis or topological signal processing.
	However, the analysis of many real-world datasets often leads to dense SCs, with many higher-order simplicies, which results in prohibitive computational requirements in terms of time and memory consumption.
	The sparsification of such complexes is thus of broad interest, i.e., the approximation of an original SC with a sparser surrogate SC (with typically only a log-linear number of simplices) that maintains the spectrum of the original SC as closely as possible.
	In this work, we develop a novel method for a probabilistic sparsification of SCs that uses so-called local densities of states.
	Using this local densities of states, we can efficiently approximate so-called generalized effective resistance of each simplex, which is proportional to the required sampling probability for the sparsification of the SC.
	To avoid degenerate structures in the spectrum of the corresponding Hodge Laplacian operators, we suggest a ``kernel-ignoring'' decomposition to approximate the sampling probability.
	Additionally, we utilize certain error estimates to characterize the asymptotic algorithmic complexity of the developed method.
	We demonstrate the performance of our framework on a family of Vietoris--Rips filtered simplicial complexes.
\end{abstract}

\begin{keywords}
	density of states, simplicial complexes, sparsification, generalized effective resistance, topological data analysis, topological signal processing
\end{keywords}

\begin{MSCcodes}
	05C50,  	
	05E45,  	
	65F15  	
\end{MSCcodes}

\section{Introduction}

Graphs have become a virtually ubiquitous modeling tool for complex systems and relational data.
They encode structural information in various machine learning tasks, and are often the subject of analysis in their own right, e.g., in the context of link prediction, node importance ranking, or label propagation, with numerous applications across disciplines.
However, graphs are inherently restricted to model pairwise interactions between entities~\cite{benson2016higher,bick2023higher,battiston2020networks}, while many natural systems include polyadic interactions (occurring between multiple entities), such as chemical reactions, co-authorship networks, and social connections.
To address this limitation, higher-order models of relational data have gained popularity recently, including hypergraphs, motifs, cell and simplicial complexes~\cite{battiston2020networks,bick2023higher,schaub2021signal}.
Accounting for such higher-order effects can have profound effects, as polyadic interactions can, e.g., promote synchronization~\cite{gambuzza2021stability}, alter and enhance label propagation methods and critical node identification~\cite{tudisco2021nonlinear,prokopchik2022nonlinear}, or support higher-order random walks and trajectory classification~\cite{schaub2019random,grande2024topological}.

However, as system size increases, the number of (possible) higher-order interactions scales combinatorially, which often leads to (often dense) higher-order models that are computationally prohibitive to analyze.
It is  thus natural to pose the question, if such higher-order networks can be sparsified: for a given high-order model \( \mc K \), can we find a model \( \mc L \) with similar key properties (such as similar topology or comparable rates of information propagation), but with significantly fewer \( k\)-order interactions?

\paragraph{Related Work}
The notion of sparsification has been extensively discussed for graphs with numerous sparsification algorithms being proposed over the years.
These methods aim to reduce the number of edges and preserve specific properties of the original graphs, such as cut costs~\cite{benczur1996approximating,ahn2012graph}, clustering~\cite{satuluri2011local}, or classification scores~\cite{li2022graph}.
One prominent approach introduced by Spielman and Srivastava is the concept of spectral sparsification, \cite{spielman2008graph,spielman2011spectral}, which preserves the spectrum of the corresponding graph Laplacian operator \( L_0 \) by sampling edges based on their generalized effective resistance~\cite{tetali1991random}.
Sparsification is also closely related to the Lottery Ticket theorem for (graph) neural networks, \cite{chen2021unified}, which posits that dense, randomly initialized neural networks contain sparse subnetworks that can be trained in isolation to achieve performance comparable to the original network.

The spectral sparsification approach has later been extended to the case of higher-order models such as hypergraphs, \cite{soma2019spectral}, and simplicial complexes, \cite{osting2017spectral}.
The latter is of specific interest if we are interested in topological properties, since SCs are generalizations of classical graph models that include higher-order relations represented by nodal simplices, aligning with the intrinsic topology of the data, \cite{quillen2006homotopical, munkres2018elements}.
Specifically, the corresponding higher-order Hodge Laplacian operators \( L_k \) define homology groups which describe \( k \)-dimensional ``holes'' (connected components, loops, voids, etc.) in the complex, relating homology to the elements of the kernel of \( L_k \), \cite{Lim15}.
Interestingly, spectral information of \( L_k \) can be used to determine topological stability of the simplicial complex, \cite{guglielmi2023quantifying}, and can be exploited for spectral clustering \cite{ebli2019notion,grande2024node,grande2024topological}, which makes the spectral sparsification of SCs particularly relevant.
These topological and spectral properties can also be employed in topological signal processing, since the Hodge Laplacians  \( L_k \) can act as natural structural shift operator for signals defined on complexes\cite{barbarossa2020topological,schaub2021signal,schaub2022signal}, and thus is also instrumental for defining neural networks defined on simplicial complexes \cite{ebli2020simplicial,roddenberry2021principled,yang2022simplicial}.
Consequently, given the importance of the spectral information of \( L_k \) for the topological features of the complex \( \mc K \), a natural goal is to develop sparsificationo schemes that preserve spectral properties of \( L_k \).

\paragraph{Overview of Approach and Main Results}
In this work, we consider the efficient spectral sparsification of simplicial complexes at the level of simplices of order \( k \), i.e., the reduction of the number of \(( k + 1 )\)-simplices with respect to the number of \( k\)-simpices while maintaining
the spectrum or specific spectral properties of the original operator \( L_k \) (note that the spectral properties of all lower order Laplacians are unaffected by the sparsification).
To be more specific, each operator  \(L_k\) is composed of down- and up-Laplacian terms, \(L_k = \Ld k + \Lu k\).
Since \( \Lu k \) describes up-adjacency between simplices of orders \( k \) and \( k+1\), we need to control its spectrum to create a good sparsifier.
Central results from spectral sparsification, \cite{spielman2008graph, spielman2011spectral, osting2017spectral}, guarantee the existence of such a sparsifier \( \mc L \) for any simplicial complex \( \mc K \) with \( m_{k+1}(\mathcal L ) = \mathcal O ( m_{k} (\mathcal K) \log m_{k}(\mathcal K) ) \), where \( m_k(\mathcal K)\) denotes the number of simplices of order $k$ in the simplicial complex \(\mathcal K\).
However, to create such a sparsifier \( \mathcal L \) we need to sample simplices from the original complex \( \mathcal K \) with a probability proportional to the generalized effective resistance vector \( \mathbf r \).
Computing this exactly is inherently computationally prohibitive as it requires access to the entries of the pseudo-inverse of the Laplacian \( \Lu k \).
This computational challenge directly links the graph and simplicial sparsification problems to efficient solvers for linear systems involving the operators $L_k$.
For graphs ($L_0$) several methods have been proposed to efficiently compute (or approximate) the effective resistance, \cite{spielman2014nearly, cohen2014solving, kelner2013simple}.
Unlike the graph case, available approaches for higher-order Laplacians\cite{savostianov2024cholesky,cohen2014solving2,kyng2016approximate} are either uniquely applicable for sparse simplicial complexes or their performance suffers in the dense case.
As a result, spectral sparsification remains a challenge for a large number of simplicial complexes arising in practice.

In this work, we show that effective resistance vector \( \b r \) can be directly computed using functional descriptors of the spectral information known as the network's \emph{local density of states}.
These descriptors, which are well-established across various areas of physics, \cite{weisse2006kernel}, were introduced for network analysis in \cite{dong2019network}, and can be used to encode topological and spectral information as node features \cite{djima2025power}.
To build a sparsification algorithms for SCs based on the local density of states, we first show that the generalized resistance vector \( \b r \) -- and hence our required sampling probability -- can be expressed in terms of the full spectral information of the up-Laplacian \( \Lu k \).
We then reformulate this computation in terms of the local densities of states \( \mu_k(\lambda \mid L_k)\) of the up-Laplacian operator \( L_k\).
One computational challenge that we need to address in this context is that existing kernel polynomial methods to approximate LDoS \cite{weisse2006kernel} fail to converge due to the presence of a large kernel in \( \Lu k \), arising from the specific algebraic structure of the Laplacians $L_k$~\cite{Lim15,schaub2019random}.
(Note that similar behaviour is associated with over-represented motifs in case of graphs, \cite{dong2019network}), which can however be addressed by other means).
To resolve this issue arising for higher-order SCs, we suggest a novel method based on a kernel-ignoring decomposition.
Additionally, we provide error estimates which allow us to derive guarantees for the method's advantageous computational complexity \( \mathcal O \left( \delta^{-3} {m_{k+1}^4}{m_k^{-3}} \right)\), where \( \delta \) controls the approximation error.
The performance of the developed method is illustrated on a family of Vietoris--Rips simplicial complexes \cite{hausmann1994vietoris}, for various density levels and orders of simplices.
We provide a brief schematic overview of the proposed method in \Cref{fig:overview}.

\begin{figure}[hbtp]
	\centering
	\includegraphics[width=1.0\textwidth]{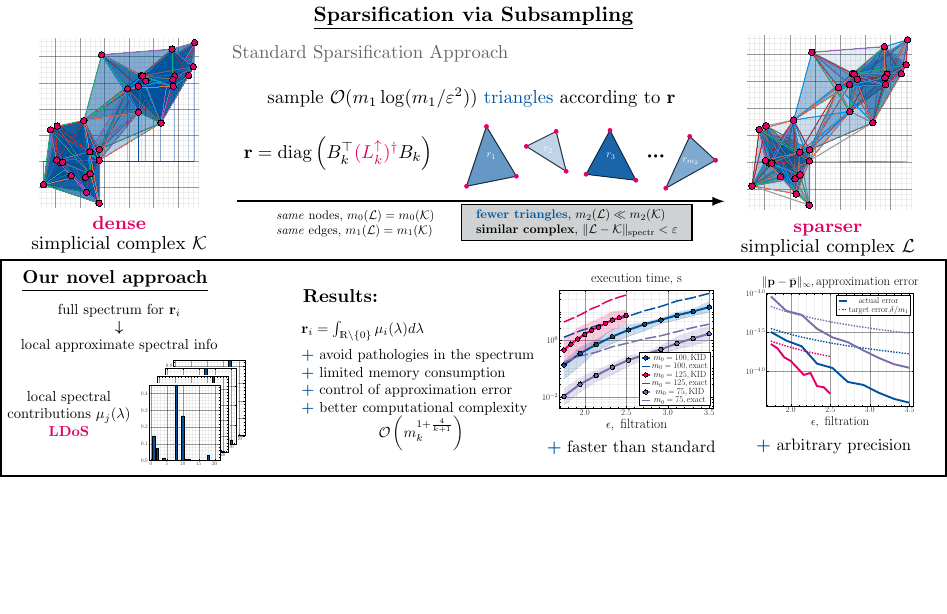}

	\vspace{-75pt}
	\caption{Overview of the proposed method. Top row: the spectral sparsification task and existing sampling approach using generalized effective resistance vector \( \b r \), \cite{spielman2011spectral,osting2017spectral}. Bottom row: proposed method for approximating the vector \( \b r\) using local densities of states \( \mu_j(\lambda)\) and kernel-ignoring decomposition (KID), \Cref{sec:KID}.}
	\label{fig:overview}
\end{figure}

\paragraph{Contributions}
Our main contributions can be summarized as follows:
\begin{itemize}
	\item[\textbf{(i)}] We show that the required sparsification probability for simplicial complexes is related to the local densities of states of a higher-order down-Laplacian {\( \Ld {k+1} \)(whose spectrum is inherited from \( \Lu k \), \cite{guglielmi2023quantifying})}.
	      This measure has previously been defined in terms of the full spectrum of the up-Laplacian; the transition to LDoS enables an efficient approximation.
	\item[\textbf{(ii)}] We develop a novel kernel-ignoring decomposition (KID) for the efficient approximation of LDoS. The suggested method avoids certain spectral structures of the up-Laplacian, which prevent a successful application of preexisting methods.
	\item[\textbf{(iii)}] Finally, we prove that our method outperforms previous approaches in terms of algorithmic complexity, bounding the number of parameters required to obtain a desired approximation error.
\end{itemize}

\paragraph*{Outline} The rest of the paper is organized as follows: \Cref{sec:SC} provides a brief introduction to simplicial complexes.
In \Cref{sec:sparse}, we present our main sparsification result and its connection to the local spectral density of states, which can be used as efficiently computable proxies for spectral information.
\Cref{sec:KID} outlines the proposed novel approach for efficiently computing the sparsifying probability measure.
Finally, numerical experiments are presented in \Cref{sec:benchmark}, followed by concluding remarks in \Cref{sec:discussion}.

\section{Preliminaries}
\label{sec:SC}

\subsection{Notation}
We use \( \sigma(A )\) to denote the spectrum of a symmetric operator \( A \). \( \sigma_+(A )\) denotes the strictly positive part of the spectrum.
We say that two symmetric operators \( A \) and \( B \) are semi-positive ordered \( A \succeq B \) iff \( A - B  \succeq 0 \), meaning \( A - B \) is a symmetric positive semidefinite operator.
Two operators \( A \) and \( B \) are spectrally \( \eps \)-close, \( A \underset{\eps}{\approx} B \) iff
\begin{equation*}
	( 1 - \eps ) B \preceq A \preceq ( 1 + \eps ) B.
\end{equation*}
We use \(\odot \) to denote element-wise matrix multiplication.
Finally, for a finite set \( \mc S \),  \( | \mc S | \) corresponds to its cardinality.

\subsection{Simplicial complexes}

Graphs are typically defined in terms of two sets: a set \( \mc V_0 \) consisting of the nodes of the graph and a set \( \mc V_1 \) consisting of edges between the nodes.
Notably each edge is a relation between two nodes, i.e., a subset of \( \mc V_0 \) of order 2.
\emph{Simplicial complexes} may be considered as a structured generalization, which include higher-order relations between the nodes, i.e., subsets of nodes of cardinality larger than two.
Let us assume that \( \mc V_0 = \{ v_1, v_2, \dots v_{m_0}\}\).
A subset \( \sigma \) of \( \mc V_0 \) with $k+1$ elements is called a \emph{simplex} of order \( k \) (\(k\)-simplex) with its maximal proper subsets of order \( ( k - 1 ) \) known as \emph{faces} of \( \sigma \).
To enable computations, we assume that simplices in \( \V k \) have a fixed lexicographical ordering, which defines an orientation of each simplex.
Then, each subset \( \sigma \subseteq \mc V_0 \) of cardinality \( k + 1\) can be represented as an ordered tuple \( \sigma = [ v_{i_0}, v_{i_1}, \dots v_{i_k} ]\).
\begin{definition}[Simplicial complex, \cite{Lim15}]
	A collection of simplices \( \mc K \) on the node set \( \mc V_0 \) is a \emph{simplicial complex}, if each simplex \( \sigma \) enters \( \mc K \) with all its faces.
	It follows that \( \mc K = \bigcup_{k=0}^{\dim K} \V k \), where \( \V k \) is a set of simplices of order \( k \) and \( \dim K \) is the maximal order of simplices in \( \mc K \).
	We provide a small example of a simplicial complex in \Cref{fig:orientation}.
\end{definition}

Let us denote the number of \( k \)-simplices in \( \mc K\) by \( m_k = | \V k | \). The sparsity of $\mc K$ at the level of \( k\)-simplices is defined by the relation between \( m_k \) and \( m_{k+1}\); we refer to it as the \( k \)-sparsity of the simplicial complex.
In particular, the \( 0 \)-sparsity describes the relation between the number of nodes \( m_0 \) and the number of edges \( m_1 \) and is typically used to define  the sparsity of a graph.
Note that the \( k \)-sparsity is not on its own indicative of the \( ( k + 1 ) \)-sparsity, as both depend on the intrinsic topology of the simplicial complex.
Consequently, if we want to sparsify a simplicial complex, it is natural to consider this problem for a fixed \( k \) rather than attempting to define a unified notion across all simplex orders.

We now formalize the concept of closeness between simplicial complexes.
Following~\cite{spielman2008graph, osting2017spectral}, we use a notion of sparsification defined via the spectral closeness of a family of (higher-order) Laplacian operators \( L_k \) that describe the topology of the underlying simplicial complex.
For this purpose, the operators \( L_k \) are formally defined below.

\subsection{Laplacian Operators and Topology}

In a simplicial complex $\mc K$, each simplex \( \sigma \) is part of \( \mc K \) along with all its faces.
Hence, there exists a map matching it to its boundary formed by its faces.
\begin{figure}[t]
	\centering
	\begin{tikzpicture}
		\begin{scope}[shift={(-0.75, 0)}]
			\draw[fill = rwth-blue, opacity = 0.4] (0,0) -- (1.5,0) -- (0.75, -1.5*3/4) -- cycle;
			\draw[fill = rwth-blue, opacity = 0.6] (0,0) -- (1.5,0) -- (0.75, 1.5*3/4) -- cycle;

			\Vertex[x=0, y=0, label = 1, style={color = rwth-magenta}, fontcolor = white, size = 0.4 ]{v1}
			\Vertex[x=1.5, y=0, label = 3, style={color = rwth-magenta}, fontcolor = white, size = 0.4 ]{v2}
			\Vertex[x=0.75, y=-1.5*3/4, label = 2, style={color = rwth-magenta}, fontcolor = white, size = 0.4 ]{v3}
			\Vertex[x=0.75, y=1.5*3/4, label = 4, style={color = rwth-magenta}, fontcolor = white, size = 0.4 ]{v4}
			\Vertex[x=2.25, y=1.5*3/4, label = 5, style={color = rwth-magenta}, fontcolor = white, size = 0.4 ]{v5}
			\Edge[Direct](v1)(v2)
			\Edge[Direct](v1)(v3)
			\Edge[Direct](v1)(v4)
			\Edge[Direct](v3)(v2)
			\Edge[Direct](v2)(v4)
			\Edge[Direct](v3)(v5)
			\Edge[Direct](v4)(v5)
			\node at (0.75, 1.5*3/4*1/3 ) { \AxisRotator[rotate=0] };
			\node at (0.75, -1.5*3/4*1/3 ) { \AxisRotator[rotate=-60] };
		\end{scope}

		\begin{scope}[shift = {(2.5, -0.5 )}]
			\node[anchor=north west,align=left,] at ( 0, 1.6*3/4 ) { \( \textcolor{rwth-magenta}{\mc V_0(\mc K)} : [1], [2], [3], [4], [5] \) };
			\node[anchor=north west,align=left,] at ( 0, 0.9*3/4 ) { \(
				\begin{aligned}
					\mc V_1(\mc K) : & \; [1, 2], [1, 3], [1, 4],           \\[-5pt]
					                 & \; [ 2, 3], [3, 4 ], [3, 5], [4, 5 ]
				\end{aligned}
				\) };
			\node[anchor=north west,align=left,] at ( 0, -.3*3/4 ) { \(
				\begin{aligned}
					\textcolor{rwth-blue}{\mc V_2(\mc K)} : [ 1, 2, 3], [ 1, 3, 4]
				\end{aligned}
				\) };
		\end{scope}

		\begin{scope}[shift={(-2.75, -3.25)}]
			\Vertex[x=2.5, y=1.5*3/4, label = 1, style={color = rwth-magenta}, fontcolor = white, size = 0.3 ]{t1}
			\Vertex[x=3.5, y=1.5*3/4, label = 2, style={color = rwth-magenta}, fontcolor = white, size = 0.3 ]{t2}
			\Vertex[x=2.5, y=1.1*3/4, label = 1, style={color = rwth-magenta}, fontcolor = white, size = 0.3 ]{t3}
			\Vertex[x=3.5, y=1.1*3/4, label = 3, style={color = rwth-magenta}, fontcolor = white, size = 0.3 ]{t4}
			\Vertex[x=2.5, y=0.7*3/4, label = 1, style={color = rwth-magenta}, fontcolor = white, size = 0.3 ]{t5}
			\Vertex[x=3.5, y=0.7*3/4, label = 4, style={color = rwth-magenta}, fontcolor = white, size = 0.3 ]{t6}
			\Vertex[x=2.5, y=0.3*3/4, label = 2, style={color = rwth-magenta}, fontcolor = white, size = 0.3 ]{t7}
			\Vertex[x=3.5, y=0.3*3/4, label = 3, style={color = rwth-magenta}, fontcolor = white, size = 0.3 ]{t8}
			\Vertex[x=2.5, y=-0.1*3/4, label = 3, style={color = rwth-magenta}, fontcolor = white, size = 0.3 ]{t9}
			\Vertex[x=3.5, y=-0.1*3/4, label = 4, style={color = rwth-magenta}, fontcolor = white, size = 0.3 ]{t10}
			\Vertex[x=2.5, y=-0.5*3/4, label = 3, style={color = rwth-magenta}, fontcolor = white, size = 0.3 ]{t11}
			\Vertex[x=3.5, y=-0.5*3/4, label = 5, style={color = rwth-magenta}, fontcolor = white, size = 0.3 ]{t12}
			\Vertex[x=2.5, y=-0.9*3/4, label = 4, style={color = rwth-magenta}, fontcolor = white, size = 0.3 ]{t13}
			\Vertex[x=3.5, y=-0.9*3/4, label = 5, style={color = rwth-magenta}, fontcolor = white, size = 0.3 ]{t14}
			\Edge[Direct](t1)(t2)
			\Edge[Direct](t3)(t4)
			\Edge[Direct](t5)(t6)
			\Edge[Direct](t7)(t8)
			\Edge[Direct](t9)(t10)
			\Edge[Direct](t11)(t12)
			\Edge[Direct](t13)(t14)

			\draw[->, line width = 1.0] (2.1, 1.9*3/4)--(2.1, -1.5*3/4);
			\draw[->, line width = 1.0] (2.1, 1.9*3/4)--(3.85, 1.9*3/4);
			\node[ anchor=south ] at ( 3., 1.9*3/4 ) { \small orientation };
			\node[ anchor = south, rotate = 90 ] at (2.1, 0.2*3/4 ) { \small ordering };

		\end{scope}

		\begin{scope}[shift = {(2,-0.5)}]
			\node[] at ( 3.25, -2.0 ) { \( B_2 \textcolor{rwth-blue}{[1, 2, 3 ]} = \overbrace{\textcolor{rwth-red}{(+1)} [1, 2]}^{\substack{\text{1st in}\\\text{order}}} + (-1) [1, 3] + (+1) [2, 3] \) };
			\node[] at ( 3.25, -3.4 ) { \( B_2 \textcolor{rwth-blue}{[1, 3, 4 ]} = \underbrace{\textcolor{rwth-red}{(+1)} [1, 3]}_{\substack{\text{1st in}\\\text{order}}} + (-1) [1, 4] + (+1) [3, 4] \) };
		\end{scope}
	\end{tikzpicture}
	\caption{ Example of a simplicial complex with ordering and orientation: nodes from \( \V 0 \) in magenta, edges from \( \V 1 \) in black, and triangles from \( \V 2 \) in blue. Orientation of edges and triangles is shown by arrows; the action of the \( B_2 \) operator is exemplified for both triangles. Adapted from \cite{savostianov2024cholesky}. \label{fig:orientation}}
\end{figure}

To formally define such a map, we make use of the lexicographical ordering (orientation) with which we endowed the simplices.
To this end, let us consider the linear spaces \( \mc C_k\) of formal sums of simplices in \( \V k \); i.e., \( \mc C_0 \) is the space composed of nodes, \( \mc C_1 \) the space composed of (ordered) edges, and so on.
Note that each such space is isomorphic to $\ds R^{m_k}$, \( \mc C_k \cong \ds R^{m_k} \).
In the following, we will also use the alternative (dual) viewpoint of considering elements of \( \mc C_k \) as signals defined on the simplices in \( \V k \), i.e., functions \( f : \V k \mapsto \ds R\) which may be thought of as ``flows'' supported on the simplices in \( \V k \).
We now define the boundary map \( B_k\) on \( \sigma = [ v_{i_0}, v_{i_1}, \dots v_{i_k} ] \in \V k \) as the following linear map that sends each $k$-simplex to its boundary simplexes via an alternating sum:
\begin{equation*}
	B_k : \mc C_k \mapsto \mc C_{k-1}, \quad B_k \sigma = \sum_{j=0}^{k} (-1)^j \sigma_{\bar j }
\end{equation*}
where \( \sigma_{\bar j } \) denotes the face of \( \sigma \) that does not include \( v_{i_j}\).
Given our fixed orientation for each simplex (as induced by the lexicographical ordering), we can use the ordered simplices in \( \V k \) and \( \V {k-1}\) as a canonical bases for \( \mc C_k\) and \( \mc C_{k-1}\), respectively.
In this basis, we can represent the boundary operators as matrices.
For simplicity, from now on, we will thus use the symbol \( B_k \) to denote the matrix representation of the boundary operator. An example for a simplicial complex and the action of the boundary operator is provided in \Cref{fig:orientation}.

Importantly, the alternating sum in the definition of boundary operators \( B_k \) upholds the fundamental lemma of homology (``a boundary of a boundary is zero''), \( B_k B_{ k + 1 } = 0 \).
This gives rise to the \emph{Hodge decompositon}:
\begin{equation}
	\label{eq:hodge_decomposition}
	\ds R^{m_k} = \lefteqn{\overbrace{\phantom{\im B_k^\top \oplus  \ker \left( B_k^\top B_k + B_{k+1} B_{k+1}^\top \right)}}^{\ker B_{k+1}^\top}} \im B_k^\top \oplus
	\underbrace{\ker \left( B_k^\top B_k + B_{k+1} B_{k+1}^\top \right) \oplus  \im B_{k+1}}_{\ker B_k}.
\end{equation}

\begin{definition}[Hodge Laplacian operator]
	The operator \( L_k = B_k^\top B_k + B_{k+1} B_{k+1}^\top \) is known as the \emph{Hodge} or \emph{higher-order} Laplacian operator and has the following properties:
	\begin{enumerate}
		\item the elements of \( \ker L_k \) in the Hodge decomposition correspond to the \( k \)-dimensional holes in \( \mc K \) (connected components for \( k = 0 \), 1-dimensional holes for \( k = 1\), and so on);
		\item the first term \( \Ld k = B_k^\top B_k \) is known as the \emph{down-Laplacian} and describes the relation between \( k\)- and \( (k-1)\)-simplices. For \( k = 1 \), \( \im \Ld k = \im B_k^\top \) contains so-called gradient flows on the edges;
		\item the second term \( \Lu k = B_{k+1} B_{k+1}^\top \) is known as the \emph{up-Laplacian} and describes the relation between \( k\)- and \( (k+1)\)-simplices. For \( k = 1 \), \( \im \Lu k = \im B_{k+1} \) contains so-called curl flows.
	\end{enumerate}
\end{definition}


Note that boundary and Laplacian operators can be generalized to the weighted case.
Let \( W_k \) be a diagonal matrix such that its \(i\)-th diagonal entry contains the weight of the \(i\)-th simplex in \( \V k \), \( [W_k]_{ii} = \sqrt{w_k(\sigma_i)}\).
Then,  \(W_{k-1}^{-1} B_k W_k \) provides a weighted version of $B_k$ that preserves all the fundamental topological features of $B_k$, and the same is true for the corresponding weighted higher-order Laplacian~\cite{guglielmi2023quantifying}.

By definition, the up-Laplacian \( \Lu{k} \) describes the relationship between simplicies in \( \V{k} \) and \( \V{k+1} \).
At the same time, its spectrum encodes information about the overall topology of the simplicial complex.
As a result, the (spectrum of the) operator \( \Lu{k} \) is an appropriate object to quantify the closeness between two simplicial complexes for the task of \( k \)-sparsification, as we describe below.

\section{Sparsification of Simplicial Complexes}
\label{sec:sparse}

We consider the \( k\)-sparsification of a weighted simplicial complex \( \mc K \).
While we will assume that \( W_{k-1} = I \) in the following, for ease of exposition, every statement below holds (or can be readily generalized) to the general case.
To simplify notation, we will further omit the index \( k \) sometimes.
The task of spectral sparsification we are concerned with can be formalized as follows.
\begin{problem}\label{prob:1}
For a given weighted simplicial complex \( \mc K \) and a sensitivity level \( \eps > 0 \), find a simplicial complex \( \mc L \) such that
\begin{enumerate}
	\item \( \V i = \mc V_i(\mc L)\) for all \( i = 0,\dots,k\), i.e., all simplices of order up to \( k \) are preserved;
	\item \( \mc V_{k+1}(\mc L) \subset \V {k+1} \) with \( m_{k+1}(\mc L) \ll m_{k+1}(\mc K) \), i.e., the number of \( (k+1) \)-simplices is significantly reduced;
	\item \( \Lu k (\mc L ) \underset{\eps}{\approx} \Lu k (\mc K) \), i.e., the up-Laplacians of \( \mc K \) and \( \mc L \) are spectrally \( \eps \)-close.
\end{enumerate}
\end{problem}

In the framework of \Cref{prob:1}, the sparsifier \( \mc L \) is obtained from the original complex \( \mc K \) by keeping all simplices of lower orders as is and subsampling \( (k+1) \)-simplices from \( \V {k+1}\) into \( \mc V_{k+1} (\mc L )\).
Following the seminal work of Spielman and Srivastava~\cite{spielman2008graph, spielman2011spectral}, and its generalization to simplicial complexes~\cite{osting2017spectral}, one can show that such a sparsifier can be obtained by a randomized sampling of \( (k+1) \)-simplices according to a specific probability measure.

More precisely, we sample a chosen number, \( q(m_k)\), of \( (k+1) \)-simplices from the original set \( \V {k+1} \) according to a probability measure \( \b p \) over \( \V {k+1}\).
To obtain a good sparsifier fo the up-Laplacian \( \Lu k(\mc L) \) for sufficiently large \( q(m_k) \), such sampling has to be \textit{unbiased}, i.e., under our sampling scheme we have to recover the original up-Laplacian \( \Lu k (\mc K) \) \emph{in expectation}.
This property may be achieved by (a) scaling the weights of sampled simplices as \( w_{k+1} (\sigma) \mapsto \frac{w_{k+1}(\sigma)}{q(m_k)\b p(\sigma)}\) and (b) accumulating the weights of repeatedly sampled simplices.
Both of these measured combined guarantee that each simplex \( \sigma \in \V {k+1} \) is sampled with its original weight \( w_{k+1}(\sigma) \) in expectation, i.e., we obtain an unbiased sampling scheme.

In the unbiased case, the probability of large deviations between randomly sampled up-Laplacian \( \Lu k (\mc L )\) and the original up-Laplacian \( \Lu k (\mc K)\) can be bound by the concentration inequalities with the probability measure \( \b p \) affecting such bound (see, f.i. Rudelson's theorem, \cite{batson2013spectral, rudelson1999random}).
As a result, the optimal choice of the probability measure \( \b p \) allows for faster concentration, smaller sample size \( q(m_k)\) and sparser resulting complex \( \mc L \).
The work \cite{osting2017spectral} suggested to choose the probability measure proportional to the generalized effective resistance of the simplices:
\begin{equation*}
	\b p \sim W^2_{k+1} \b r, \qquad \text{ where } \qquad  \b r = \diag \left( B_{k+1}^\top ( \Lu k )^\dagger B_{k+1} \right).
\end{equation*}
Under this sampling scheme, a log-linear number \( q(m_k) = O( m_k \log (m_k/\eps^2)) \) of sampled simplices suffices to obtain a spectral sparsifier  \( \Lu k (\mc L ) \underset{\eps}{\approx} \Lu k (\mc K) \). We summarize the steps of the random spectral sparsification in \Cref{alg:sparse} and provide a theoretical guarantee in \Cref{thm:sparsify} below.

\begin{algorithm}[!t]
	\caption{Pseudocode for spectral sparsification}
	\label{alg:sparse}
	\begin{algorithmic}[1]
		\REQUIRE up-Laplacian \( \Lu k \), boundary map \( B_{k+1} \), weight function \( w_{k+1} (\cdot)\), number of sampled simplices \( q(m_k)\)
		\STATE \( \b r \gets \diag \left( B_{k+1}^\top ( \Lu k )^\dagger B_{k+1} \right)  \) \hfill \textcolor{rwth-black-50}{\COMMENT{form the GER vector}}
		\STATE \( \b p \gets W^2_{k+1} \b r / \| W^2_{k+1} \b r \|_1 \)\hfill \textcolor{rwth-black-50}{\COMMENT{form and normalize probability measure}}
		\STATE \( T \gets \texttt{sample}( \V {k+1}, q(m_k), \text{replace} = \texttt{true}) \) \hfill \textcolor{rwth-black-50}{\COMMENT{sample \( (k+1) \)-simplices} }
		\STATE \( \mc V_{k+1} ( \mc L ) \gets \emptyset \), \( \widetilde {W}^2_{k+1} \gets 0 \) \hfill \textcolor{rwth-black-50}{\COMMENT{initialize sparsifier with new weights} }
		\FOR { \( \sigma \in T \) }
		\STATE \( \mc V_{k+1} ( \mc L ) \gets \mc V_{k+1} (\mc L ) \cup \{ \sigma \} \) \hfill \textcolor{rwth-black-50}{\COMMENT{add \( \sigma \) to the sparsifier}}
		\STATE \( \widetilde {W}^2_{k+1} [ \sigma; \, \sigma ] \gets  \widetilde {W}^2_{k+1} [ \sigma;\, \sigma ] + \frac{w_{k+1}(\sigma)}{q(m_k)\b p(\sigma)} \) \hfill \textcolor{rwth-black-50}{\COMMENT{accumulate bias-corrected weights}}
		\ENDFOR
		\STATE \( \mc L \gets { \displaystyle \cup_{i = 0}^k \V k} \cup \mc V_{k+1} (\mc L )  \)
		\RETURN sparsifier \( \mc L \), new weights \( \widetilde{W}^2_{k+1} \)
	\end{algorithmic}
\end{algorithm}

\begin{theorem}[Simplicial Sparsification, \cite{spielman2008graph, osting2017spectral}] \label{thm:sparsify}
	For any \( \eps > \frac{1}{\sqrt{m_k}} > 0 \) and a given simplicial complex \( \mc K \), let complex \( \mc L \) be a sparsifier sampled according to \Cref{alg:sparse} with  \( q ( m_k ) \ge 9 C^2 m_{k} \log ( m_{k} / \eps^2  )\), for some absolute constant  \( C>0 \).
	Then, with probability at least \(1/2\), the up-Laplacian of \( \mc L \) is spectrally \(\eps\)-close to the up-Laplacian of $\mc K$, i.e., we have  \( \Lu k (\mc L ) \underset{\eps}{\approx} \Lu k (\mc K) \).
\end{theorem}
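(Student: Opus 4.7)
The plan is to follow the blueprint of Spielman--Srivastava, as adapted to simplicial complexes in \cite{osting2017spectral}, by rewriting $\Lu k$ as a sum of rank-one matrices indexed by $(k{+}1)$-simplices, whitening with $(\Lu k)^{\dagger/2}$, and applying a matrix concentration bound to the whitened sum. First I would write
\begin{equation*}
\Lu k(\mc K) = B_{k+1} W_{k+1}^2 B_{k+1}^\top = \sum_{\sigma \in \V{k+1}} w_{k+1}(\sigma)\, b_\sigma b_\sigma^\top,
\end{equation*}
where $b_\sigma$ is the column of $B_{k+1}$ indexed by $\sigma$. Setting $v_\sigma = \sqrt{w_{k+1}(\sigma)}\,(\Lu k)^{\dagger/2} b_\sigma$, one checks that $\sum_\sigma v_\sigma v_\sigma^\top = \Pi$, the orthogonal projector onto $\im \Lu k$, and that $\|v_\sigma\|^2 = w_{k+1}(\sigma)\, r_\sigma$, which is exactly (up to the normalization $\|W_{k+1}^2 \b r\|_1$) the sampling probability $\b p(\sigma)$ prescribed by \Cref{alg:sparse}. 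A standard identity (sum of diagonal entries of a projector equals its rank) yields $\|W_{k+1}^2\b r\|_1 = \operatorname{rank} \Lu k \le m_k$, which is where the factor $m_k$ in the sample complexity comes from.

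Next I would verify unbiasedness in the whitened picture. If $X_1,\dots,X_q$ are i.i.d.\ draws from $\V{k+1}$ under $\b p$, then the rescaling by $w_{k+1}(\sigma)/(q\, \b p(\sigma))$ in lines 6--7 of \Cref{alg:sparse} is precisely the one that makes $\widetilde \Pi = \frac{1}{q}\sum_{j=1}^{q} \frac{1}{\b p(X_j)}\, v_{X_j} v_{X_j}^\top$ satisfy $\mathbb E[\widetilde \Pi] = \Pi$, and $\widetilde \Pi = (\Lu k)^{\dagger/2}\, \Lu k(\mc L)\, (\Lu k)^{\dagger/2}$. Therefore
\begin{equation*}
\|\widetilde \Pi - \Pi\|_2 \le \eps \quad \Longrightarrow \quad \Lu k(\mc L) \underset{\eps}{\approx} \Lu k(\mc K),
\end{equation*}
so the theorem reduces to bounding $\|\widetilde \Pi - \Pi\|_2$.

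For this bound I would invoke Rudelson's inequality for random rank-one sums (as used in \cite{spielman2011spectral, rudelson1999random, batson2013spectral}): writing $y_j = v_{X_j}/\sqrt{\b p(X_j)}$, the key quantity is $M := \max_\sigma \|v_\sigma\|^2/\b p(\sigma) = \|W_{k+1}^2 \b r\|_1 \le m_k$, and Rudelson's theorem yields
\begin{equation*}
\mathbb E\,\Bigl\|\tfrac{1}{q}\sum_{j=1}^q y_j y_j^\top - \Pi \Bigr\|_2 \;\le\; C\,\sqrt{\frac{M \log q}{q}}
\end{equation*}
for some absolute $C>0$. Plugging in $M \le m_k$ and $q = 9C^2 m_k \log(m_k/\eps^2)$ gives an expected deviation of at most $\eps/2$; Markov's inequality then yields $\|\widetilde \Pi - \Pi\|_2 \le \eps$ with probability at least $1/2$. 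Combining with the implication above finishes the proof.

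The main obstacle, and the point where I would spend most care, is handling the kernel of $\Lu k$, which in the higher-order setting is nontrivial (by the Hodge decomposition \eqref{eq:hodge_decomposition} it carries the $k$-dimensional homology). Whitening by the pseudoinverse square root restricts the analysis to $\im \Lu k$, and every vector $b_\sigma$ lies in $\im \Lu k$ by construction, so $\Lu k(\mc L)$ and $\Lu k(\mc K)$ share the same image and the pseudoinverse manipulation is legitimate; the assumption $\eps > 1/\sqrt{m_k}$ together with the logarithmic sample size is what makes the Rudelson bound applicable. One also needs to check that $\widetilde \Pi$ does not accidentally lose rank, which follows because any $\sigma$ with $v_\sigma \ne 0$ has $\b p(\sigma)>0$, so the sampled simplices collectively span $\im \Lu k$ with probability one under $q \gg \operatorname{rank}\Lu k$.
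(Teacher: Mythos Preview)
The paper does not give its own proof of \Cref{thm:sparsify}; the theorem is stated with a citation to \cite{spielman2008graph,osting2017spectral} and then used as a black box. Your sketch is precisely the Spielman--Srivastava argument as carried out in those references (rank-one decomposition, whitening by $(\Lu k)^{\dagger/2}$, reduction to $\|\widetilde\Pi-\Pi\|_2\le\eps$, Rudelson), so in that sense it matches the ``paper's proof'' by reference.

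Two small points are worth tightening. First, the version of Rudelson's inequality you quote, with $\log q$ in the numerator, does not literally give $\eps/2$ when you plug in $q=9C^2 m_k\log(m_k/\eps^2)$: the ratio $\log q/\log(m_k/\eps^2)$ tends to $1$, not to $\eps^2$. The versions used in \cite{spielman2011spectral,rudelson1999random} involve $\log(\mathrm{rank})$ (here $\le\log m_k$) or are handled by a self-bounding argument; either fixes the arithmetic. Second, the claim in your last paragraph that the sampled simplices span $\im\Lu k$ ``with probability one'' is false for any finite $q$ and, more importantly, unnecessary: once $\|\widetilde\Pi-\Pi\|_2\le\eps<1$, the restriction of $\widetilde\Pi$ to $\im\Pi$ has all eigenvalues in $[1-\eps,1+\eps]$, so $\widetilde\Pi$ automatically has full rank on $\im\Lu k$ and the images coincide. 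With those two adjustments your argument is complete.
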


The computational bottleneck in \Cref{alg:sparse} is finding the generalized effective resistance vector \( \b r \), as it amounts to computing the pseudo-inverse \( ( \Lu k )^\dagger \), which has computational complexity \( \mathcal{O}( m_k^3 + 2 k m_k m_{k+1} ) \).
This poses the central problem of the current work:

\begin{problem}
Find a computationally efficient and arbitrarily close approximation of the generalized effective resistance vector \( \b r \) for a given weighted simplicial complex \( \mc K \) and simplices of fixed \( k \)-th order.
\end{problem}

In the rest of the paper, we formulate and discuss a novel method for approximating the generalized effective resistance vector \( \b r \), using efficiently computable spectral densities of the up-Laplacian operators \( \Lu k \).
Note that the unbiased random sparsification process defined above admits any probability measure \( \b p \), and hence, any approximation of the effective resistance vector $
	\b r$ (\Cref{alg:sparse}) can be used for the sparsification.
However, non-optimal choices of the measure \( \b p \) will require far larger number of sampled simplices \( q(m_k) \) to achieve similar spectra for \( \Lu k (\mc K )\) and \( \Lu k (\mc L )\).
Hence a good approximation of \( \b r \) is critical to obtain a good sparsifier with a small number of sampled simplices.
Below we demonstrate the low sensitivity of the sparsification to the moderate perturbations of the resistance-based probability measure \( \b p \sim W^2_{k+1} \b r \).

\begin{remark}
	Since \( \b p \) is a probability measure over \( \V {k+1} \), it is natural to measure its perturbations in terms of \( \frac{1}{m_{k+1}} \), i.e.,
	\(
	\left| \b p (\sigma_i) - \b p^{(\delta)}(\sigma_i) \right|< \frac{\delta}{m_{k+1}},
	\)
	since the size of the perturbation is meaningful only in relation to the actual support of the measure.
	As shown in \Cref{fig:perturb_measure}, random perturbations of \( \b p \) can, on average, slow down the convergence of the approximately sampled complex \( \mc{L}^{(\delta)} \) to the original simplicial complex \( \mc{K} \) in terms of the number of sampled simplices.
	However, even for moderately high values of \( \delta \), such as \( \delta = 0.5 \), the convergence rate remains largely unaffected.
\end{remark}

\begin{figure}[tb!]
	\centering
	\vspace{-10pt}
	\includegraphics[width = 0.8\columnwidth]{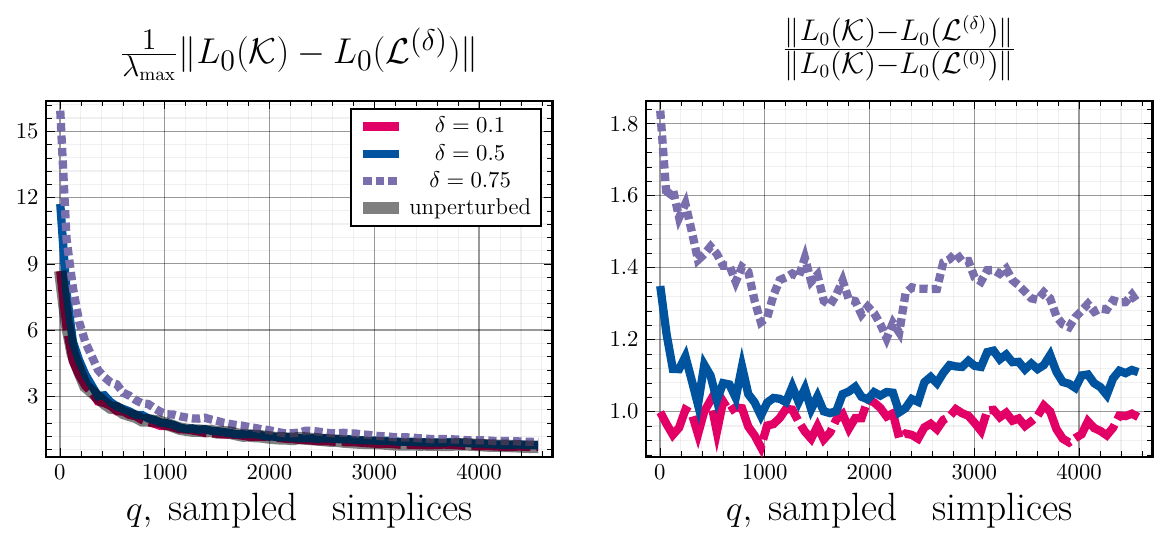}
	\caption{ Convergence of the sampled simplicial complex \( \mc L \) to the original complex \( \mc K \) at \(0\)-order in terms of the spectrum of \( L_0 \) Laplacian operator. Left pane: convergence rate vs the number of sampled edges \( q \) for various perturbed measures \( \b p^{(\delta)}\). Right pane: convergence rate for chosen values of \( \delta \) in relation to the unperturbed sparsifier. \( m_0 = 100 \), \( m_1 = \frac{m_0(m_0-1)}{3}\). All curves are averaged over \( 25 \) random perturbations for VR-complex (see \Cref{sec:benchmark}). \label{fig:perturb_measure}
	}
\end{figure}

\section{Sparsification measure via Kernel Ignoring Decomposition of
  Local Densities of States}
\label{sec:KID}

In this section we show how the effective resistance vector\( \b r \) can be efficiently approximated via functional descriptors of the up-Laplacian spectrum known as spectral densities or densities of states \( \mu_k(\lambda \mid L_k)\), \cite{benson2016higher}.
Subsequently, we propose a novel method for computing \( \mu_k(\lambda \mid L_k)\).

\subsection{Density of States}

\begin{definition}[Density of States]
	For a given symmetric matrix \( A = Q \Lambda Q^\top \) with \( Q^\top Q = I \) and diagonal \( \Lambda = \diag \left( \lambda_1, \dots \lambda_{n} \right) \), the \emph{spectral density} or \emph{density of states} is defined as
	\begin{equation}
		\mu( \lambda \mid A ) = \frac{1}{n} \sum_{i=1}^{n} \bm\delta \left( \lambda - \lambda_i \right)
	\end{equation}
	where \( \bm\delta(\lambda )\) is a Dirac delta function. Additionally, let \( \b q_i \) be a corresponding unit eigenvector of \( A \) (such that \( A \b q_i = \lambda_i \b q_i \) and \( Q = \left( \b q_1 \mid \b q_2 \mid \dots \mid \b q_n \right)\)).
	Then one can define a set of \emph{local densities of states}:
	\begin{equation}
		\mu_j ( \lambda \mid A ) = \sum_{i=1}^{n} \left| \b e_j^\top \b q_i \right|^2 \bm\delta \left( \lambda - \lambda_i \right)
		\qquad \text{ for } \qquad j = 1, \dots n,
	\end{equation}
	with \( \b e_j \) being the $j$-th canonical basis vector.
\end{definition}
Here, the density function \( \mu( \lambda \mid A ) \) contains the overall spectrum of the operator \( A \) while the family of local densities \( \mu_j ( \lambda \mid A )\) describes the contribution of the simplex \( \sigma_j \in \V k \) to the spectral information.

Finally, one should note that by definition, \( \mu(\lambda \mid A) \) and \( \mu_j (\lambda \mid A) \) are generalized functions.
Hence, the quality of their computation is difficult to assess directly. To this end, one can instead consider their histogram representations:
\begin{equation*}
	h_i = \int_{x_i}^{x_i+\Delta_x} \mu( \lambda \mid A ) d\lambda, \qquad h^{(j)}_i = \int_{x_i}^{x_i+\Delta_x} \mu_j( \lambda \mid A ) d\lambda
\end{equation*}
which correspond to the discretized output of the convolution of spectral densities with a mollifier \( K_{\Delta h} \), i.e., \( h_j(\lambda) =  \left[ \mu_j(\lambda\mid A ) \ast K_{\Delta h }  \right] = \int_{\ds R } \frac{1}{\Delta h} K \left( \frac{ \lambda - \eta }{\Delta h } \right)\mu_j(\eta \mid A) d\eta\), where \( K \) is a smooth approximation to identity.

With the notion of spectral density in place, we can now obtain the following reformulation of the generalized effective resistance vector \( \b r \)  and its computation:
\begin{theorem}[Effective resistance through Local Densities of States]\label{thm:GER_DOS}
	For a given simplicial complex \( \mc K \), with \(k\)-th order up-Laplacian \( \Lu k = B_{k+1}^{} W_{k+1}^2 B_{k+1}^\top\), the generalized effective resistance \( \b r \) can be computed through a family of local densities of states \( \{ \mu_i(\lambda \mid \Ld {k+1 }) \} \) of the $k+1$ down-Laplacian as follows:
	\begin{equation*}
		\b r_i = \int_{\ds R \backslash \{ 0 \} } \mu_i ( \lambda \mid \Ld {k+1} ) d\lambda
	\end{equation*}
\end{theorem}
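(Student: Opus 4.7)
The plan is to unpack the definition of \(\b r\) via the singular value decomposition of the (weighted) boundary operator, and to identify the resulting diagonal entries with atomic integrals of the local densities of states of \(\Ld{k+1}\).

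I would first set \(M = B_{k+1} W_{k+1}\) so that the weighted up-Laplacian factorises as \(\Lu k = M M^\top\), while the associated weighted down-Laplacian is \(\Ld{k+1} = M^\top M\). Fixing an SVD \(M = U \Sigma V^\top\) then yields the simultaneous spectral decompositions \(\Lu k = U \Sigma^2 U^\top\) and \(\Ld{k+1} = V \Sigma^2 V^\top\), which makes manifest the fact (already used informally in the paper) that the two operators share exactly the same strictly positive spectrum, while their respective kernels live in different ambient spaces of harmonic \(k\)- and \((k{+}1)\)-cochains.

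Next, I would substitute \((\Lu k)^\dagger = U \Sigma_+^{-2} U^\top\), where \(\Sigma_+^{-2}\) acts as \(\sigma_j^{-2}\) on positive singular values and as \(0\) on \(\ker \Sigma\). A short manipulation then gives the key identity
\begin{equation*}
	M^\top (M M^\top)^\dagger M \;=\; V \Sigma \, \Sigma_+^{-2} \, \Sigma \, V^\top \;=\; V_+ V_+^\top ,
\end{equation*}
where \(V_+\) collects the columns of \(V\) associated with strictly positive singular values, so that \(V_+ V_+^\top\) is the orthogonal projector onto \(\ker(\Ld{k+1})^\perp\). Reading off the \(i\)-th diagonal entry, together with the conversion to the normalisation adopted for \(\b r\) in the algorithm, yields
\begin{equation*}
	\b r_i \;=\; \sum_{j\,:\,\lambda_j > 0} \left| \b e_i^\top \b v_j \right|^2 ,
\end{equation*}
with \((\lambda_j, \b v_j)\) the eigenpairs of \(\Ld{k+1}\). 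By the definition of the local density of states, the right-hand side is exactly \(\int_{\ds R \backslash \{0\}} \mu_i(\lambda \mid \Ld{k+1}) \, d\lambda\), which is the claimed identity.

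The only delicate point, and the one I would check most carefully, is the bookkeeping of kernels. The operator \(\Lu k\) has a nontrivial kernel corresponding to the \(k\)-th homology classes visible in the Hodge decomposition~\eqref{eq:hodge_decomposition}, and the pseudoinverse must discard precisely those directions; the identity above is what transfers this kernel omission to \(\Ld{k+1}\), while excising \(\{0\}\) from the integration domain suppresses the contributions of the harmonic eigenvectors of \(\Ld{k+1}\). Once this is under control, the remainder is routine linear algebra, and the argument extends to arbitrary weightings by absorbing all weight matrices into \(M\).
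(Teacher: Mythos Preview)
Your proposal is correct and follows essentially the same route as the paper: both arguments take the SVD of \(B_{k+1}W_{k+1}\), compute \((\Lu k)^\dagger\) in that basis, and reduce \(\b r_i\) to \(\sum_{j:\lambda_j>0}|v_{ij}|^2\), which is then read off as the integral of \(\mu_i(\lambda\mid\Ld{k+1})\) over \(\ds R\setminus\{0\}\). The only cosmetic difference is that the paper works with a truncated SVD from the outset, whereas you keep the full SVD and phrase the key identity as the projector \(V_+V_+^\top\) onto \(\ker(\Ld{k+1})^\perp\); your remark about the kernel bookkeeping is exactly the point the paper handles by truncation.
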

{\begin{proof}
	Let \( B_{k+1} W_{k+1} = U S V^\top\) where \( S \) is diagonal and invertible and both \( U \) and \( V \) are orthogonal.
	Stated differently, $USV^\top$ is a truncated singular value decomposition of the matrix \( B_{k+1} W_{k+1} \) with eliminated kernel.
	Then:
	\begin{equation}
		( \Lu k )^\dagger = \left( B_{k+1} W_{k+1}^2 B_{k+1}^\top \right)^\dagger = \left( U S^2 U^\top \right)^\dagger = U S^{-2} U^\top
	\end{equation}
	It follows that
	\begin{equation*}
		\begin{aligned}
			\b r & = \diag \left( W_{k+1} B_{k+1}^\top ( \Lu k )^\dagger B_{k+1} W_{k+1} \right)   = \diag \left(  V S U^\top U S^{-2} U^\top U S V^\top \right) = \diag \left(  V V^\top \right)
		\end{aligned}
	\end{equation*}
	As a result we can express \( \b r_i = \| V_{ i \cdot } \|^2 = \sum_j | v_{ij} |^2  \).
	Hence, the \(i\)-th entry of the generalized resistance vector is defined by the sum of squares of the $i$th entries of the eigenvectors \( \b v_j \) associated to nonzero eigenvalues of the down-Laplacian matrix \( \Ld {k+1} = W_{k+1} B_{k+1}^\top B_{k+1} W_{k+1} \).

	Note that
	\begin{equation}
		\mu_i ( \lambda \mid \Ld {k+1} ) = \sum_{j=1}^{m_{k+1}} \left| \b e_i^\top \b q_j \right|^2 \bm\delta \left( \lambda - \lambda_j \right)  = \sum_{j=1}^{m_{k+1}} \left| q_{ij} \right|^2 \bm\delta \left( \lambda - \lambda_j \right).
	\end{equation}
	Moreover, we have $\Ld {k+1} = V S^2 V^\top$ and \( Q = V \) (up to the zero eigenpairs). It thus follows that
	\begin{equation}
		\begin{aligned}
			\b r_i & = \| V_{ i \cdot } \|^2 = \sum_j | v_{ij} |^2 = \int_{ \ds R \backslash \{ 0 \} }  \sum_{j=1}^{m_{k+1}} \left| q_{ij} \right|^2 \bm\delta \left( \lambda - \lambda_j \right)  d\lambda = \int_{ \ds R \backslash \{ 0 \} } \mu_i ( \lambda \mid \Ld {k+1} ) d\lambda
		\end{aligned}
	\end{equation}
\end{proof}}

\Cref{thm:GER_DOS} implies that it is sufficient to obtain the family \( \{ \mu_i(\lambda \mid \Ld {k+1 }) \} \) for the next down-Laplacian \( \Ld {k+1 } \) in order to compute the sparsifying probability measure at the level of \(k\)-simplices.
At the same time, by its definition, the spectral density \( \left\{ \mu_i(\lambda \mid A ) \right\}\) requires the complete spectral information of the original operator \( A \) and, hence, is not immediately computationally beneficial.
To avoid this computational overhead, we leverage the functional nature of the local densities of states and obtain an efficient approximation of \( \{ \mu_i(\lambda \mid \Ld {k+1 }) \} \) via truncated polynomial expansion.

\subsection{Efficient approximation of Local Densities of States}
Fast approximations of spectral densities are typically based on Kernel Polynomial Methods (KPM), \cite{weisse2006kernel}, that schematically operate as follows:
\begin{enumerate}[leftmargin=*]
	\item Shift the operator \( A \mapsto H \) so that \( \sigma( H ) \subseteq [a; b] \);
	\item Consider a polynomial basis \( T_m(x)\) on \( [a; b]\), orthogonal with respect to the weight function \( w(x)\).
	      Then the local densities of states can be decomposed as
	      \[ \mu_j(\lambda \mid H ) = \sum_{m=0}^\infty d_{mj} w( \lambda )T_m(\lambda) \]
	      where the coefficients \( d_{mj} \) are known as \emph{moments}. In practice, one is interested in a truncated decomposition of the form \( \widehat \mu_j(\lambda \mid H ) = \sum_{m=0}^M d_{mj} w(x)T_m(x) \) for some finite \(M\); 
	\item To determine the values of \( d_{mj} \) it can be leveraged that $d_{mj}$ are functions of \( T_m(H)\) and can be efficiently sampled via Monte-Carlo methods.
	      In this contexts, a three-term recurrence of orthogonal polynomial bases can be exploited~\cite{benson2016higher} -- we review this step in more detail below.
\end{enumerate}

We point out that a typical choice for the shift interval is \( [-1, 1]\).
The polynomial basis is typically chosen to correspond to Chebyshev polynomials of the first kind.

A fundamental limitation of KPM is the polynomial nature of the decomposition, which may require a large number of moments \( M \) for \( \widehat \mu_j(\lambda \mid H )\) to produce an accurate approximation for ``pathologic'' functions that are far from being polynomials.
In the case of the (local) densities of states, a particularly challenging setting is associated with eigenvalues of high multiplicity, which result in dominating ``spikes'' in the histogram representations of the spectral densities.
Following KPM, one would have to approximate an outlier with a polynomial function.
In the case of the classical graph Laplacian \( L_0 \) and the adjacency matrix, these spikes may be caused by over-represented motifs in the graph \cite{benson2016higher}.
However, in this setting, one knows the closed form of the corresponding eigenspace, and thus, the over-represented eigenvalues can be explicitly filtered out.
For the general case of up- and down-Laplacians \( \Lu k \) and \( \Ld k \) of order \( k>0 \), the eigenspaces with high multiplicity are unavoidable due to the Hodge decomposition, \eqref{eq:hodge_decomposition}: indeed, since \( \im B_k^\top \subseteq \ker B_{k+1}^\top = \ker \Lu k \) and \( \im B_{k+1} \subseteq \ker B_k = \ker \Ld k \), both operators have large kernels which are detrimental to KPM approximation.
Moreover, the kernel's bases depend on the topology of the simplicial complex and cannot be easily estimated.

Below, we thus propose a novel modified method for approximating \( \{ \mu_i(\lambda \mid \Ld {k+1}) \} \) that intentionally avoids the spike in the kernel of \( \Ld {k+1}\), with all the necessary definitions.

\subsection{Kernel-ignoring Decomposition}
As discussed above, the quality of the KPM approximation of LDoS \( \{ \mu_j( \lambda \mid \Ld k ) \} \) suffers from the large null space of the operator.
However, \Cref{thm:GER_DOS} suggests that the target resistance vector \( \b r \) ignores the values of \( \mu_j( \lambda \mid \Ld {k+1} )\) associated with the kernel since the region of integration excludes \( 0 \).
Note that, since \( \mu_j( \lambda \mid \Ld {k+1} ) \approx \sum_{m =0}^M d_{mj} w(\lambda) T_m(\lambda)  \) is a functional decomposition, the approximation error associated with the spike at \( \lambda = 0 \) will not be localized but will spread across the whole domain.
For that reason, we suggest a modified shifting technique that leads to a decomposition that ignores the singular value of \( \lambda \) associated with the operator's null space.

In prior works, the choice \( H = \frac{2}{\lambda_{\max{}}} \Ld {k+1} - I \) was considered, where \( \lambda_{\max{}}\) denotes the largest eigenvalue of \( \Ld {k+1}\).
Hence, the spectrum is bounded in an interval \( \sigma(H) \subseteq [-1; 1]\) and the null eigenvectors of \( \Ld {k+1}\) are shifted to \( -1 \in \sigma(H)\).
In this work, we instead select \( H = \frac{1}{ \lambda_{\max{}} } \Ld {k+1}\) and define a symmetrized version of the local densities of states as:
\begin{equation}
	\tilde \mu_j( \lambda \mid H ) = \begin{cases}
		\mu_j( \lambda \mid H ),  & \text{ if } \lambda \in (0, 1]  \\
		0,                        & \text{ if } \lambda = 0         \\
		-\mu_j(-\lambda \mid H ), & \text{ if } \lambda \in [-1, 0)
	\end{cases}
\end{equation}
Note that the support of the symmetrized \(\tilde \mu_j( \lambda \mid H )\) still falls within \( [-1; 1 ]\), and the spike associated with the value \( \lambda = 0 \) is by design tied to \( 0 \).

The remainder of the approximation approach can now be adopted from the KPM method.
Let us assume that \( T_m(x)\) are Chebyhev polynomials of the first kind. Specifically,
\begin{equation}
	T_0(x) = 1, \; T_1(x) = x, \qquad T_{m+1}(x) = 2x T_m(x) - T_{m-1} (x)
\end{equation}
forming an orthonormal basis on \( [-1, 1 ]\) with respect to the scalar product defined by the weight function \(w(x) = 2/((1+\delta_{0m}) \pi \sqrt{1-x^2})\).

We decompose the symmetrized local density of states \( \tilde \mu_j (\lambda \mid H )\) as:
\begin{equation}
	\tilde \mu_j( \lambda \mid H ) = \sum_{m=0}^\infty d_{mj} w(\lambda)T_m(\lambda),
\end{equation}
Since \( \tilde \mu_j (\lambda \mid H )\) is odd by design, its decomposition should contain only odd Chebyshev polynomials.
Moreover, the remaining \( d_{mj} \) for odd \( m \) can be explicitly expressed through the entries of \( T_m(H)\) as follows:

\begin{lemma}
	Let \( T_m(x)\) be Chybeshev polynomials of the first kind and \(\tilde \mu_j (\lambda \mid H )\) be a symmetrized local density of states defined above. Then its moments \( d_{mj}\) are given by:
	\begin{equation}
		d_{mj} = \begin{cases}
			0,                            & \quad \text{if } m \text{ is even} \\
			2\; \sum_{\mathclap{\substack{ \vspace{9pt}                        \\ \lambda_i \in \sigma(H)\backslash \{ 0 \}}}} \;\; \left| \b e_j^\top \b q_i \right|^2  T_m(\lambda_i) =
			2 \left[ T_m(H) \right]_{jj}, & \quad  \text{if } m \text{ is odd} \\
		\end{cases}
	\end{equation}
	thus, the $m$-th vector of moments (for odd \(m\)) can be expressed as $d_{m\bullet} = 2 \diag( T_m(H) )$.
\end{lemma}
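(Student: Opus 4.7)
The plan is to extract the moments $d_{mj}$ by exploiting the orthogonality of the Chebyshev polynomials with respect to the weight $1/\sqrt{1-\lambda^2}$, then use the specific odd symmetry of $\tilde\mu_j(\lambda\mid H)$ to eliminate the even moments, and finally identify the surviving sum as a diagonal entry of $T_m(H)$.

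First, I would apply the standard Chebyshev orthogonality relation $\int_{-1}^1 T_m(\lambda)T_n(\lambda)/\sqrt{1-\lambda^2}\,d\lambda = \tfrac{\pi}{2}(1+\delta_{m0})\delta_{mn}$ to the expansion $\tilde\mu_j(\lambda\mid H) = \sum_m d_{mj} w(\lambda) T_m(\lambda)$. With the normalization chosen in the statement, this yields the compact formula $d_{mj} = \int_{-1}^{1} \tilde\mu_j(\lambda\mid H)\,T_m(\lambda)\,d\lambda$. Because $\tilde\mu_j$ is odd by construction and $T_m$ has parity $(-1)^m$, the integrand is odd whenever $m$ is even, so $d_{mj}=0$ in that case; this handles the first branch of the claim.

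For odd $m$, I would split the integral over $[-1,0]$ and $[0,1]$, use the definition $\tilde\mu_j(\lambda) = -\mu_j(-\lambda)$ on the negative part together with $T_m(-\lambda) = -T_m(\lambda)$ (valid because $m$ is odd) to fold the two halves into a single integral, obtaining $d_{mj} = 2\int_0^1 \mu_j(\lambda\mid H)\,T_m(\lambda)\,d\lambda$. Substituting $\mu_j(\lambda\mid H) = \sum_i |\mathbf{e}_j^\top \mathbf{q}_i|^2 \bm\delta(\lambda-\lambda_i)$ and using the sifting property of the Dirac delta immediately yields
\begin{equation*}
d_{mj} = 2 \sum_{\lambda_i \in \sigma(H)\setminus\{0\}} \left|\mathbf{e}_j^\top \mathbf{q}_i\right|^2 T_m(\lambda_i),
\end{equation*}
where the exclusion of $0$ stems from the fact that the shifted operator $H = \Ld{k+1}/\lambda_{\max}$ is positive semidefinite, so $\sigma(H)\subseteq[0,1]$ and the integration domain $(0,1]$ drops exactly the kernel contribution.

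The final step is to recognize the right-hand side as $2[T_m(H)]_{jj}$. Spectrally, $[T_m(H)]_{jj} = \sum_{\lambda_i\in\sigma(H)} T_m(\lambda_i)|\mathbf{e}_j^\top \mathbf{q}_i|^2$, and the contribution from $\lambda_i = 0$ vanishes because for odd $m$ one has $T_m(0) = \cos(m\pi/2) = 0$. Hence the kernel eigenvalues (whose eigenvectors span the potentially large null space of $\Ld{k+1}$) drop out automatically, and the matrix identity holds verbatim. I do not anticipate a serious obstacle: the only subtle point is the weight-function normalization $w$ containing $\delta_{0m}$ (which ensures the orthogonality constants cancel cleanly in the inversion for $d_{mj}$), and carefully tracking the sign from $T_m(-\lambda)=-T_m(\lambda)$ when folding the negative half of the integral.
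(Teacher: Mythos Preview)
Your proposal is correct and follows essentially the same route as the paper: both extract the moments as $d_{mj} = \langle \tilde\mu_j, T_m\rangle$ via Chebyshev orthogonality, kill the even moments by parity, evaluate the odd ones using the sifting property of the delta functions, and then invoke $T_m(0)=0$ for odd $m$ to reinstate the kernel eigenvalues and identify the sum with $2[T_m(H)]_{jj}$. The only cosmetic difference is that the paper first records the general identity $\langle \mu_j(\cdot\mid A), f\rangle = [f(A)]_{jj}$ and then specializes, whereas you fold the integral explicitly; the substance is the same.
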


\begin{proof}
	Let \( \mu_j ( x \mid A )  \) be the LDoS for a general symmetric matrix $A$ with spectra decomposition \( A = Q \Lambda Q^\top \).
	Then for an arbitrary polynomial function \( f(x) \), we can compute
	\begin{align}
		\left\langle  \mu_j ( x \mid A ), f(x) \right\rangle & = \int_{-\infty}^{+\infty}  \sum_{i=1}^{n} \left| \b e_j^\top \b q_i \right|^2\bm\delta \left( x - \lambda_i \right) f(x) dx = \\
		                                                     & = \sum_{i=1}^{n} \left| \b e_j^\top \b q_i \right|^2 \int_{-\infty}^{+\infty}  \bm\delta \left( x - \lambda_i \right) f(x) dx  \\
		                                                     & =  \sum_{i=1}^{n} \left| \b e_j^\top \b q_i \right|^2 f(\lambda_i) =  \sum_{i=1}^{n} \left| q_{ji} \right|^2 f(\lambda_i)
	\end{align}
	Since \( f(A) = Q f(\Lambda) Q^\top \), we get
	\begin{equation}
		\left[ f(A) \right]_{jj} = \b e_j^\top Q f(\Lambda) Q^\top \b e_j = \b q_j^\top f(\Lambda) \b q_j = \sum_{i=1}^{n} \left| q_{ji} \right|^2 f(\lambda_i) = \left\langle  \mu_j ( x \mid A ), f(x) \right\rangle
	\end{equation}

	The case for the symmetrized LDoS \( \tilde \mu_j ( x \mid H ) \) is only marginally different:
	\begin{equation*}
		\left\langle  \tilde \mu_j ( x \mid H ), f(x) \right\rangle = \sum_{\substack{i=1   \\ \lambda_i \ne 0 }}^{n} \left| q_{ji} \right|^2 \left( f(\lambda_i) - f(-\lambda_i) \right)
	\end{equation*}
	Due to the orthonormality of \( T_m(x) \), moments \( d_{mj}\) are given by inner products, \( d_{mj} = \left\langle  \tilde \mu_j ( x \mid H ), T_m(x) \right\rangle \).
	For even \( m \), Chebyshev polynomial \( T_m(x) \) is even, hence \( T_m(\lambda_i) -= T_m(-\lambda_i)\) and \( d_{mj} = 0 \).
	In the case of odd \(m\), \( T_m(x) \) is odd itself and \( T_m(0) = 0 \), so, as a result:
	\begin{equation}
		d_{mj} = \sum_{\substack{i=1 \\ \lambda_i \ne 0 }}^{n} \left| q_{ji} \right|^2 \left( T_m(\lambda_i) - T_m(-\lambda_i) \right) =
		2 \sum_{\substack{i=1                                                                 \\ \lambda_i \ne 0 }}^{n} \left| q_{ji} \right|^2 T_m(\lambda_i) = 2 \sum_{i=1}^{n} \left| q_{ji} \right|^2 T_m(\lambda_i)
	\end{equation}
	where the last equality follows from the fact that \( T_m(0) = 0 \).
	Finally, since \( T_m(x)\) are polynomial functions, we get
	\begin{equation}
		d_{mj} = 2 \sum_{i=1}^{n} \left| q_{ji} \right|^2 T_m(\lambda_i) = 2 \left[ T_m(H) \right]_{jj}
	\end{equation}
\end{proof}

Instead of computing the diagonal elements \( T_m(H) \) directly, we can use Monte-Carlo estimations for the diagonal.
Specifically, we will use the fact that for any matrix \( X \):
\begin{equation}
	\mathrm{diag}\, X = \ds E \left[ \b z \odot X \b z \right],
\end{equation}
where \( \b z \) is a vector of i.i.d. random variables with zero mean and unit variance.
In practice, we approximate the expectation via sampling with \( N_z \) samples:
\begin{equation}
	\mathrm{diag}\, X \approx \frac{1}{N_z } ( Z \odot X Z ) \b 1
\end{equation}
where \( \odot \) is the Hadamard (element-wise) product,
\( \b 1 \) is a vector of ones, and \( Z \) is a matrix collecting \(N_z\) copies of \( \b z \) column-wise  \cite{hutchinson1989stochastic, meyer2021hutch++}.

This Monte-Carlo sampling strategy reduces the compuations of \( \diag ( T_m(H))\) to simple \texttt{matvec} operations.
Importantly the calculations can be efficiently updated for the next order of moments \(d_{m+1, \bullet}\), due to the recurrent definition of \( T_m(x)\).
Assume we store the values of \( T_i(H) Z \) for \( i = 0\dots m\).
In order to obtain \(T_{m+1}(H) Z\), we compute \( T_{m+1}(H) Z = 2 H \cdot ( T_{m}(H) Z ) - ( T_{m-1}(H) Z ) \) requiring only one additional \texttt{matvec} operation.
As a result, the computational cost of the approximation is fixed to \( \mc O \left(  N_z M \,\texttt{nnz} (H) \right)\), where \( \mc O \left( \texttt{nnz} (H) \right)\) is the cost of one \texttt{matvec} operation for the operator \( H \). We provide a brief pseudocode for the KID-approximation method in \Cref{alg:KID}.

\begin{algorithm}[H]
	\caption{Pseudocode for KID-approximation method}
	\label{alg:KID}
	\begin{algorithmic}[1]
		\REQUIRE up-Laplacian \( \Lu k \), scaling \( \lambda \), number of moments \( M \), number of Monte-Carlo vectors \( N_z \)
		\STATE \( H \gets \frac{1}{\lambda} \Lu k \)
		\STATE \( Z \gets \texttt{getRandomSigns}(m_{k+1}, N_z)\) \hfill \textcolor{rwth-black-50}{\COMMENT{form MC-vectors}}
		\STATE \( D \gets 0 \) \hfill\textcolor{rwth-black-50}{\COMMENT{initialize matrix of moments \(d_{mk}\)}}
		\STATE \( T_1 \gets Z \), \quad \( T_2 \gets H Z  \)  \hfill\textcolor{rwth-black-50}{\COMMENT{start Chebyshev sequence}}
		\FOR{ \( m = 3 \) to \( M \)}
		\STATE \( T_m \gets 2 H \cdot T_2 - T_1 \)
		\IF{ \( m  \) is \textbf{odd}}
		\STATE \( D[:, m] \gets \texttt{row\_mean} (T_m \odot Z)\) \hfill \textcolor{rwth-black-50}{\COMMENT{KID trick: ignore even moments}}
		\STATE \( T_1 \gets T_2 \), \quad \( T_2 \gets T_m\)
		\ENDIF
		\ENDFOR
		\STATE GER \( \b r \gets \texttt{histogramIntegral}(D, M)\)
		\RETURN \( \b r \)
	\end{algorithmic}
\end{algorithm}

\subsection{Error Propagation and the Choice of Constants}

The computational complexity of the KID method described above cannot be directly compared with the computation of the resistance vector \( \b r \) via the pseudo-inverse of \( \Lu{k} \), because the complexity of the KID methods is parametrized by \( N_z \) and \( M \) rather than the number of simplices \( m_k \).
Nonetheless, one can establish the relation between the method's parameters \( N_z \) and \(M\) and the original simplicial complex through error estimates for the approximation of the local densities of states, as we demonstrate below.

Let us consider the histogram representations of the exact symmetrized densities {\( h^{(j)} = \left[ \tilde \mu_j(\lambda \mid A ) \ast K_{\Delta h }  \right] \)} and its KPM approximation \( \wh h^{(j)}_M = \left[ \left( \sum_{m=0}^M d_{mj} w(\lambda)T_m(\lambda) \right) \ast K_{\Delta h }  \right] \) where the moments \( d_{mj}\) are Monte-Carlo sampled.
Using the estimation bound from \cite[Thm 4.2]{benson2016higher}, we obtain:
\begin{equation}
	\label{eq:err1}
	\ds E \left\| h^{(j)} - \wh h^{(j)}_M \right\|_\infty \le \frac{1}{\Delta h} \left( \frac{6L}{M} + \frac{2 \| K_{\Delta h} \|_\infty}{\pi \sqrt{N_z}} \right)
\end{equation}
where \( L \) is the Lipschitz constant of \(h^{(j)}\).
This result can be further extended to an error estimate for the KID-approximation; we start by showing the following auxiliary fact:
\begin{lemma}
	For a given simplicial complex \( \mc K \) and GER vector \( \b r \), it holds that \( \| \b r \|_1 = m_{k} - \sum_{i=-1}^{k-1} (-1)^{k-1-i} ( m_i - \beta_{i+1} ) \), where \( \beta_k = \dim \ker L_k \) denotes \( k\)-th Betti's number and \( m_{-1} = 0 \).
\end{lemma}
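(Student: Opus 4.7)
The plan is to reduce $\| \b{r} \|_1$ to the rank of a boundary operator and then iterate a standard rank--nullity identity down the boundary chain. Starting from $\b{r} = \diag\bigl( B_{k+1}^\top (\Lu k)^\dagger B_{k+1} \bigr)$ and using the cyclic property of the trace together with $\Lu k = B_{k+1} B_{k+1}^\top$, I first establish
\begin{equation*}
\| \b{r} \|_1 = \mathrm{tr}\bigl( B_{k+1}^\top (\Lu k)^\dagger B_{k+1} \bigr) = \mathrm{tr}\bigl( (\Lu k)^\dagger \Lu k \bigr) = \mathrm{rank}( \Lu k ) = \mathrm{rank}( B_{k+1} ),
\end{equation*}
using that $(\Lu k)^\dagger \Lu k$ is the orthogonal projector onto $\im \Lu k$, together with the identity $\mathrm{rank}(BB^\top) = \mathrm{rank}(B)$.

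The next step is to express $\mathrm{rank}(B_{k+1})$ in terms of the $m_j$ and $\beta_j$. The Hodge decomposition \eqref{eq:hodge_decomposition} gives $\ker B_k = \ker L_k \oplus \im B_{k+1}$ as an orthogonal sum, so that $\beta_k = \dim \ker B_k - \mathrm{rank}(B_{k+1})$. Combining this with rank--nullity for $B_k : \ds R^{m_k} \to \ds R^{m_{k-1}}$ produces the one-step recurrence
\begin{equation*}
\mathrm{rank}(B_{k+1}) = m_k - \mathrm{rank}(B_k) - \beta_k, \qquad \mathrm{rank}(B_0) = 0,
\end{equation*}
whose base case corresponds precisely to the convention $m_{-1} = 0$ used in the lemma.

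The final step is to unroll this recurrence --- either by direct back-substitution or by an easy induction on $k$ --- into an alternating sum of $(m_j - \beta_j)$ contributions, and then perform a single index shift $j = i+1$ to split off the leading $m_k$ and match the indexing $i \in \{-1,\dots,k-1\}$ appearing in the statement. I expect the only nontrivial part to be careful bookkeeping of signs and index ranges in this last rearrangement; the trace identity and the Hodge-based recurrence are otherwise entirely standard. As a sanity check, one may verify the base case $k=0$: the recurrence collapses to $\mathrm{rank}(B_1) = m_0 - \beta_0$, which is the classical fact that the sum of effective resistances of a graph equals $m_0$ minus the number of connected components.
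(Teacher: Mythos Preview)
Your proposal is correct and follows essentially the same strategy as the paper: both reduce $\|\b r\|_1$ to $\mathrm{rank}(B_{k+1}) = \dim \im \Lu k$ and then unwind a Hodge-decomposition recurrence down the boundary chain. Your version is marginally more self-contained---you obtain the rank via trace cyclicity instead of via the SVD representation from the proof of \Cref{thm:GER_DOS}, and you derive the one-step recurrence from rank--nullity directly rather than invoking the cited spectral-inheritance principle---but the mathematical content is identical.
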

\begin{proof}
	From the proof of \Cref{thm:GER_DOS}, we know that \( \b r_i \ge 0 \), and
	\begin{equation}
		\| \b r \|_1 = \sum_i \b r_i = \sum_{i, j} | v_{ij} |^2
	\end{equation}
	Since each of the right singular vectors \( V_{ \cdot j} \) of \( B_{k+1} W_{k+1}\) has unit length, \( \sum_{i, j} | v_{ij} |^2 = \text{number of columns of } V = \dim \im \Lu k = m_k - \dim \ker \Lu k \)  given that singular vectors \( V_{\cdot j}\) are defined via truncated SVD.
	Due to the discrete Hodge decomposition \eqref{eq:hodge_decomposition}, \( \dim \ker \Lu k = \dim \im B_k^\top + \dim \ker L_k \) and \( \im B_k^\top = \im \Ld k\).
	According to the spectral inheritance principle for Hodge Laplacians \cite{guglielmi2023quantifying}, \( \sigma_+( \Ld k ) = \sigma_+(\Lu {k-1})\) where \( \sigma_+ \) denotes the set of positive eigenvalues of corresponding operators; as a result, \( \dim \im \Ld k = \dim \im \Lu {k-1} \).
	Employing such steps recursively, one obtains  \( \| \b r \|_1 = m_{k} - \sum_{i=-1}^{k-1} (-1)^{k-1-i} ( m_i - \beta_{i+1} ) \).
\end{proof}

We can now formulate the following error bound for the KID method:
\begin{theorem}\label{thm:error}
	For any fixed \( \delta > 0 \), let \( \bf p \) be the exact sparsifying probability measure  for a given sufficiently dense simplicial complex \( \mc K \).
	Let \( \wh{ \bf p }\) be the corresponding KID-approximated sparsifying probability measure.
	If the approximation \( \wh{ \bf p } \) is truncated at \( M \ge 24 L \frac{m_{k+1}}{\delta m_k}\) moments, and we use \( N_z \ge \frac{8 \| K_{\Delta h}\|^2}{\pi^2} \frac{m^2_{k+1}}{\delta^2 m_k^2}\) samples, then
	\begin{equation}
		\| \b p - \wh{ \b p } \|_\infty \le \frac{ \delta }{m_{k+1}}
	\end{equation}
	and the computational complexity of the KID approximation is
	\[
		\mc O \left( \delta^{-3} {m^4_{k+1}}{m_k^{-3}} \right).
	\]
\end{theorem}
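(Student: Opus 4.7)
The plan is to chain together four bounds, moving from the KPM error estimate \eqref{eq:err1} on histograms, to an error on the generalized effective resistance vector $\b r$, to an error on the sparsifying probability measure $\b p$, and finally to translate the chosen values of $M$ and $N_z$ into a runtime. Since \Cref{thm:GER_DOS} gives $\b r_i = \int_{\ds R \setminus \{0\}} \mu_i(\lambda \mid \Ld{k+1})d\lambda$, and the symmetrization confines this integral to the fixed-length interval $[0,1]$ (the spectrum lies in $[-1,1]$ after the scaling $H = \Ld{k+1}/\lambda_{\max{}}$), the first step is to bound
\begin{equation*}
  |\wh r_j - r_j| \;\le\; \int_0^1 \bigl|\tilde h_j - \wh{\tilde h}_j\bigr|\,d\lambda \;\le\; \bigl\|\tilde h_j - \wh{\tilde h}_j\bigr\|_\infty,
\end{equation*}
so that the right-hand side of \eqref{eq:err1} directly controls $\ds E|\wh r_j - r_j|$ up to a constant depending only on $\Delta h$ (which I will absorb, since $\Delta h$ is chosen $O(1)$ on the unit interval).

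Next I would propagate this entrywise error through the normalization $\b p = W_{k+1}^2 \b r / \|W_{k+1}^2 \b r\|_1$. For the unweighted case (the general case is analogous by absorbing the diagonal factor), writing
\begin{equation*}
 \wh p_j - p_j \;=\; \frac{\wh r_j - r_j}{\|\b r\|_1} \;+\; r_j\!\left(\frac{1}{\|\wh{\b r}\|_1} - \frac{1}{\|\b r\|_1}\right),
\end{equation*}
and using the auxiliary lemma to deduce $\|\b r\|_1 = m_k - \sum_{i=-1}^{k-1}(-1)^{k-1-i}(m_i-\beta_{i+1}) = \Theta(m_k)$ in the sufficiently dense regime, both terms are bounded by a constant multiple of $\max_j|\wh r_j - r_j|/m_k$. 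Hence it suffices to enforce $\ds E|\wh r_j - r_j| \lesssim \delta m_k/m_{k+1}$ uniformly in $j$ to obtain the target $\|\b p - \wh{\b p}\|_\infty \le \delta/m_{k+1}$.

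Now I would plug the two terms of \eqref{eq:err1} into the reduction above and allocate $\delta m_k / (2 m_{k+1})$ to each summand. Solving $6L/M \le c\, \delta m_k/m_{k+1}$ and $2\|K_{\Delta h}\|_\infty/(\pi\sqrt{N_z}) \le c\,\delta m_k/m_{k+1}$ yields precisely the stated thresholds $M \ge 24 L\, m_{k+1}/(\delta m_k)$ and $N_z \ge 8\|K_{\Delta h}\|^2 m_{k+1}^2 / (\pi^2 \delta^2 m_k^2)$ (the constants $24$ and $8$ absorb the accounting from the normalization step).

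For the complexity claim, the KID scheme performs $M$ Chebyshev updates, each of which applies $H = W_{k+1} B_{k+1}^\top B_{k+1} W_{k+1}/\lambda_{\max{}}$ to $N_z$ probe vectors. Crucially, one should factor the matvec through $B_{k+1}$: since $\mathrm{nnz}(B_{k+1}) = (k+2)\, m_{k+1} = O(m_{k+1})$ for fixed $k$, each application of $H$ to a single vector costs $O(m_{k+1})$, not $O(m_{k+1}^2/m_k)$. Multiplying $M \cdot N_z \cdot O(m_{k+1})$ with the prescribed thresholds produces the advertised $\mc O(\delta^{-3} m_{k+1}^4 m_k^{-3})$. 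The main obstacle I anticipate is Step 2: showing that the propagation through normalization does not introduce an extra factor of $m_{k+1}/m_k$, which is exactly the point at which the ``sufficiently dense'' hypothesis enters through the lemma guaranteeing $\|\b r\|_1 = \Theta(m_k)$; everything else is a careful but essentially mechanical bookkeeping of constants inherited from \cite[Thm 4.2]{benson2016higher}.
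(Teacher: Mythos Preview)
Your proposal follows essentially the same route as the paper: start from the histogram estimate \eqref{eq:err1}, pass to an entrywise bound on $\b r$, divide by $\|\b r\|_1$, invoke the auxiliary lemma together with the ``sufficiently dense'' assumption (the paper makes this explicit as $m_k/2 \ge m_{k-1}+\beta_k$, giving $\|\b r\|_1 \ge m_k/2$), split the budget $\tfrac{\delta}{m_{k+1}}\|\b r\|_1$ evenly between the $M$- and $N_z$-terms, and read off the thresholds and the complexity $M\cdot N_z\cdot O(m_{k+1})$.

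The one place you are more careful than the paper is the normalization step. The paper does \emph{not} carry out your two-term decomposition; it simply asserts that $\|\b r - \wh{\b r}\|_\infty \le \tfrac{\delta}{m_{k+1}}\|\b r\|_1$ suffices for $\|\b p - \wh{\b p}\|_\infty \le \tfrac{\delta}{m_{k+1}}$, i.e., it effectively treats $\wh{\b p}=\wh{\b r}/\|\b r\|_1$ with the \emph{true} normalizer. Your worry that the second term $r_j\bigl(\|\wh{\b r}\|_1^{-1}-\|\b r\|_1^{-1}\bigr)$ might introduce an extra $m_{k+1}/m_k$ factor is legitimate (since $|\,\|\b r\|_1-\|\wh{\b r}\|_1\,|$ can be as large as $m_{k+1}\|\b r-\wh{\b r}\|_\infty$), and the density hypothesis alone does not kill it; the paper simply sidesteps this by not renormalizing. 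So your plan matches the paper's argument, and where it is more scrupulous it exposes a subtlety that the paper leaves implicit rather than resolves.
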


\begin{proof}	To show the estimate for the approximation of the sparsifying norm, we consider the estimate
	\begin{equation}
		\ds E \left\| h^{(j)} - \wh h^{(j)}_M \right\|_\infty \le \frac{1}{\Delta h} \left( \frac{6L}{M} + \frac{2 \| K_{\Delta h} \|_\infty}{\pi \sqrt{N_z}} \right)
	\end{equation}
	which translates to the estimate on vector \( \b r\).
	Given each \( \b r_i \ge  0\), the probability measure \( \b p \) is given by \( \b p = \frac{1}{\| \b r \|_1} \b r \).
	As a result, to obtain the bound \( \| \b p - \wh{ \b p} \|_\infty \le \frac{\delta}{m_{k+1}} \), it is sufficient to show:
	\begin{equation}
		\frac{6L}{M} + \frac{2 \| K_{\Delta h} \|_\infty}{\pi \sqrt{N_z}} \le \frac{\delta}{m_{k+1}} \| \b r \|_1.
	\end{equation}
	The bound above is guaranteed for parameters \( M \) and \( N_z \) satisfying:
	\begin{equation}
		\frac{6L}{M} \le \frac{\delta}{2m_{k+1}} \| \b r \|_1 , \qquad \frac{2 \| K_{\Delta h} \|_\infty}{\pi \sqrt{N_z}} \le \frac{\delta}{2m_{k+1}}\| \b r \|_1
	\end{equation}
	Finally, assuming a sufficiently dense (namely, \( \frac{m_k}{2} \ge m_{k-1} + \beta_k \) ) simplicial complex, one gets \( \| \b r \|_1 = m_{k} -\sum_{i=-1}^{k-1} (-1)^{k-1-i} ( m_i - \beta_{i+1} ) \ge m_k - \beta_k - m_{k-1}  \ge \frac{1}{2}m_{k}\), completing the lower bounds on \( M \) and \( N_z\) in the theorem's statement.
\end{proof}

To get a clearer understanding of the overall complexity, recall that \( m_{k+1} = \mc O \left(m_k^{ 1 + \frac{1}{k+1} }\right) \) for a simplicial complex that includes all possible subsets of \( k+2\) nodes as \((k+1)\)-simplices (f.i. completely connected graph with all possible triangles), resulting into the worst-case complexity of \( \mc O \left( \delta^{-3} m_k^{ 1 + \frac{4}{k+1}} \right)\).
In other words, our approximation is not worse than direct computation for \( k = 1 \) (even in the densest case) and is asymptotically linear in \( k \).

\section{Benchmarking}\label{sec:benchmark}

In this section, we numerically investigate the performance of the KID approximation and its computational complexity.
In particular, through our experimental evaluation, we aim to do the following:
\begin{enumerate}[label=\bfseries(\roman*),leftmargin=*]
	\item support the asymptotic estimate for the approximation error \eqref{eq:err1} in terms of the number of moments \( M \) and the number of Monte-Carlo sample vectors \( N_z \);
	\item showcase the computational complexity of the KID approximation using the (scaled) oracle choice for the parameters, \Cref{thm:error};
	\item compare the actual execution time of the approximation to the direct computation for complexes of different sizes and densities.
\end{enumerate}

\paragraph{Vietoris--Rips filtration} \Cref{thm:error} and Equation~\eqref{eq:err1} describe the performance of the developed method in terms of the number of simplices \( m_k \).
To numerically illustrate these behaviours appropriately, we consider a family of arbitrarily large and dense simplicial complexes.
For this reason, we use simplicial complexes induced by a filtration procedure on point clouds.
Formally, we proceed as follows:
\begin{enumerate}[leftmargin=*]
	\item  We consider \( m_0 \) points embedded in $\mathbb R^2$, sampled randomly in two clusters.
	      Specifically, \( \frac{m_0}{2}\) points are sampled from \( \mc N( \b 0, I )\) and \( \frac{m_0}{2}\) points are sampled from \( \mc N( c \b 1, I )\), for some \( c > 0 \).
	\item For a fixed filtration threshold \( \epsilon > 0 \), a simplex \( \sigma = [v_{i_1}, ... v_{i_p} ] \) on these nodes enters the generated complex \( \mc K \) if and only if $d_{\mc M }(v_{i_j}, v_{i_k}) \le \epsilon$ for all pairs $j$ and $k$.
\end{enumerate}
This straightforward filtration is known as  Vietoris--Rips filtration, and the corresponding complex \( \mc K \) as a VR-complex.
An illustrative example is provided in \Cref{fig:example}. In the chosen setup, the value of the filtration parameter \( \epsilon \) naturally governs the density of the generated simplicial complexes of every order, as shown by the right panel in \Cref{fig:example}: larger values of \( \epsilon \) define complexes with a higher number of edges, triangles, tetrahedrons, etc., until every possible simplex is included in \( \mc K \).

\begin{figure}[t]
	\centering
	\includegraphics[width = 0.8\columnwidth]{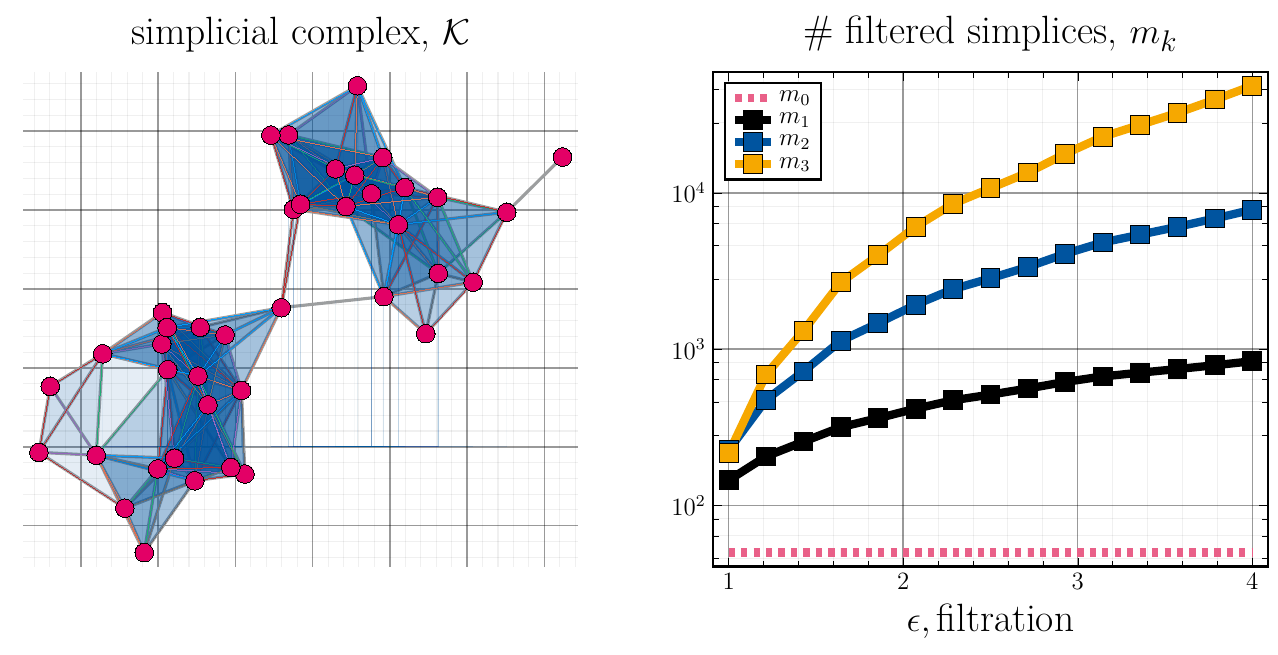}
	\caption{ Example of VR-filtration. Left pane: point cloud with \( m_0 = 40 \) and filtration \( \epsilon = 1.5 \), inter-cluster distance \( c = 3 \). Right pane: dynamics of the number of simplices of different orders for varying filtration parameter \( \epsilon\). \label{fig:example}}
\end{figure}

\paragraph{Parameter choice and computational complexity}
The error estimate from \Cref{eq:err1} suggests that the approximation error for the sparsifying norm \( \b p \) scales as \( M^{-1}\) in terms of the number of moments and as \( N_z^{-1/2}\) in terms of the number of Monte-Carlo vectors (MC-vectors).
To illustrate this behaviour, we fix one of the parameters (\( M \) or \( N_z \)) to their theoretical estimates provided by \Cref{thm:error} and demonstrate the dynamic of the error \( \| \b p - \wh{\b p }\|_\infty \) as the function of the other parameter.
As shown by \Cref{fig:M_Nz}, the overall scaling law coincides with the estimates of \Cref{eq:err1} in the case of \(1 \)-sparsification for \( \Lu 1\) operator.
Note that all experiments are conducted in the a sufficiently dense setting, where \( m_2 \ge m_1 \ln m_1 \).
\begin{figure}[t]
	\centering
	\includegraphics[width = 0.8\columnwidth]{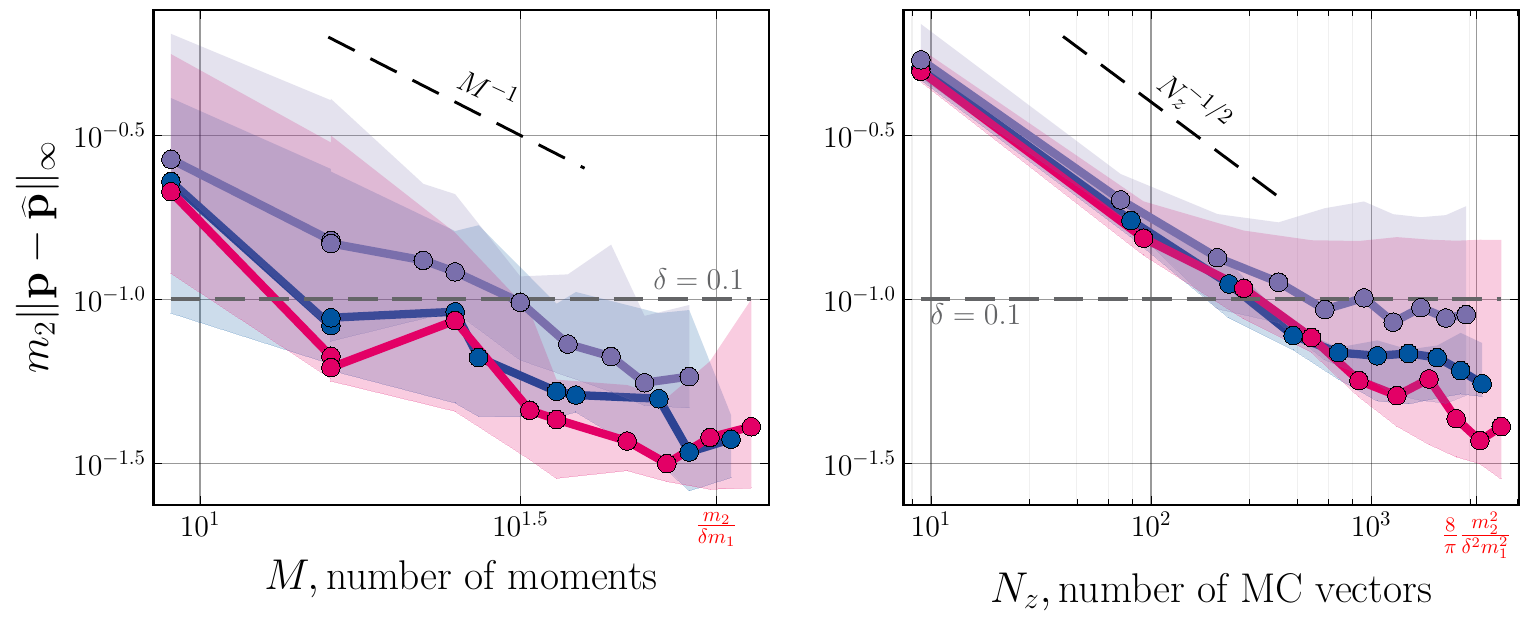}
	\caption{
		Dependence of the approximation error \( \| \b p - \wh{ \b p } \|_\infty \) on the number of moments \( M \) and number of MC vectors \( N_z \). Values are tested up to (scaled) theoretical bounds from \Cref{thm:error} (in red); line colors correspond to varying \( m_0 \) in the point cloud. Left pane: errors vs the number of moments \( M \) with fixed theoretical \( N_z \); right pane: errors vs the number of MC vectors \( N_z \) with fixed theoretical \( M \).  Errors are averaged over several generated VR-complexes; colored areas correspond to the spread of values. \label{fig:M_Nz}
	}
\end{figure}

We explicitly highlight two observations from \Cref{fig:M_Nz}:
(1) larger and denser simplicial complexes tend to exhibit faster convergence in both parameters (especially in the number of moments \( M \));
(2) \Cref{thm:error} provides theoretical (greedy) estimates for \( M \) and \( N_z \) that are sufficient for achieving the target approximation quality \(\delta\) and can be interpreted asymptotically.
Consequently, we may choose scaled (and empirically sufficient) values for these parameters:
\[
	M = \left\lceil \frac{m_{k+1}}{\delta\,m_k} \right\rceil
	\quad\text{and}\quad
	N_z = \left\lceil \frac{1}{10}\,\frac{8}{\pi^2}\,\frac{m_{k+1}^2}{\delta^2\,m_k^2} \right\rceil.
\]

\begin{figure}[t]
	\centering
	\includegraphics[width = 0.8\columnwidth]{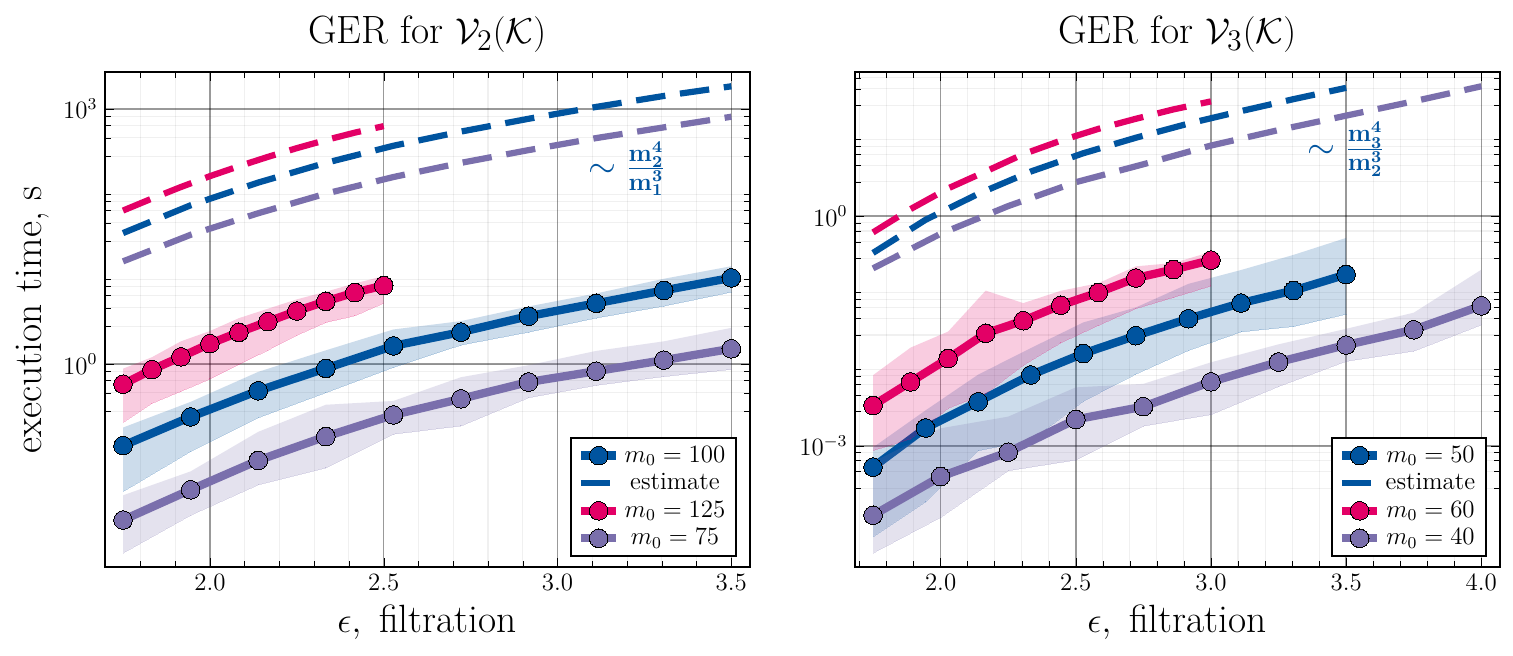}
	\caption{
		Execution time of KID approximation for effective resistance of triangles, \( \V 2\) (left), and tethrahedrons, \( \V 3 \) (right). Line colors correspond to varying \( m_0 \)  in the point cloud; theoretical estimation of the computational complexity is given in dash.
		Execution times are averaged over several generated VR-complexes; colored areas correspond to the spread of values.  \label{fig:times}
	}
\end{figure}

Given this choice of parameters, in \Cref{fig:times} we demonstrate that the complexity estimate
\(\mathcal{O}\!\bigl(\tfrac{m_{k+1}^4}{m_k^3}\bigr)\)
from \Cref{thm:error} aligns with the actual execution time of the KID approximation for varying filtration parameters \(\epsilon\) in the cases of \(1\)- and \(2\)-sparsification of VR-complexes.

\paragraph{Comparison with the direct computation}
Finally, we compare the execution time of the KID approximation with that of the direct computation of the sparsifying measure \( \b{p} \) for \( 1 \)-sparsification, using the approximation parameters mentioned above (see \Cref{fig:comparison}).
Note that although the densest case complexity estimate
\(\mathcal{O}\!\bigl(\delta^{-3}\,m_k^{\,1+\frac{4}{k+1}}\bigr)\)
suggests that the KID method's execution time might be comparable to direct computation, in practice the developed algorithm is significantly faster while still maintaining the target approximation error
\(\|\b{p} - \widehat{\b{p}}\|_\infty \le \frac{\delta}{m_2}\).

\begin{figure}[t]
	\centering
	\includegraphics[width = 0.8\columnwidth]{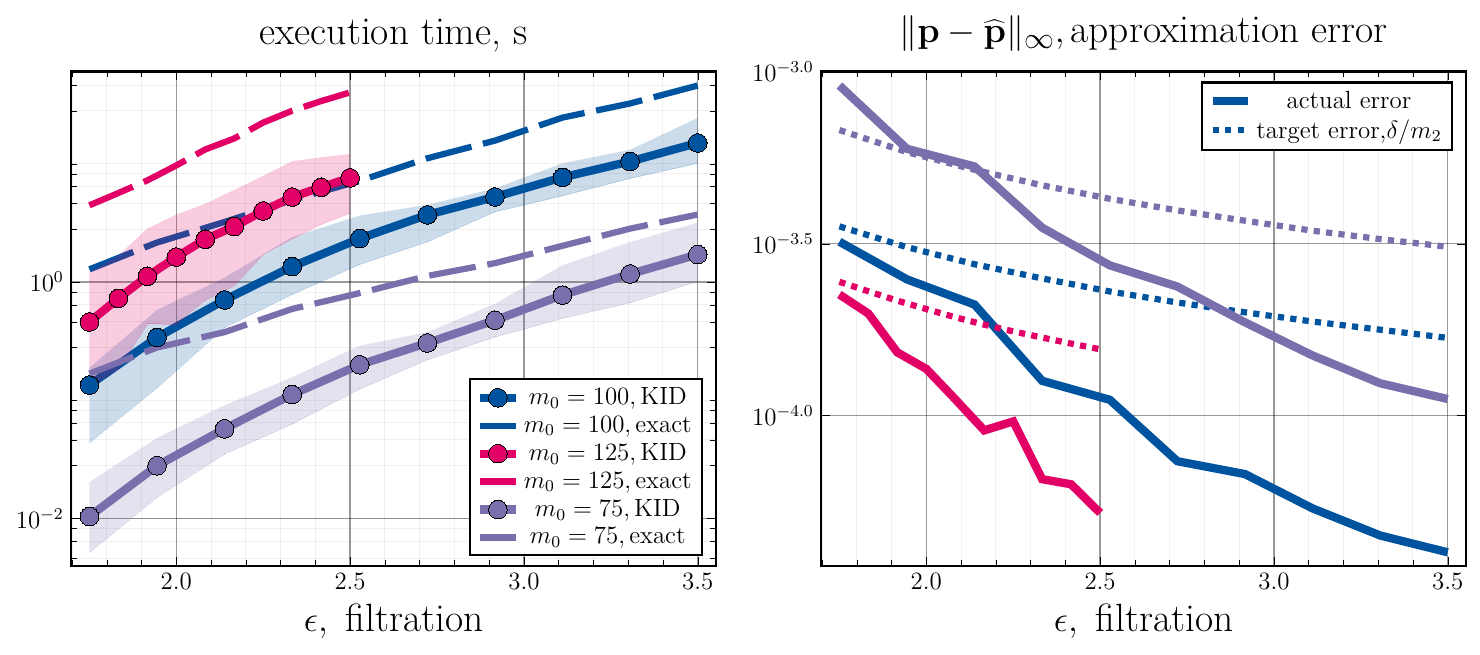}
	\caption{
		Computation time comparison between KID-approximation \( \wh{ \b p} \), solid line, and directly computed sparsifying measure \( \b p \), dashed line (left), and corresponding approximation error \( \| \wh{ \b p} - \b p \|_\infty \) (right). Target approximation error is given in dotted line (right pane);  line colors correspond to varying \( m_0 \)  in the point cloud.
		Execution times are averaged over several generated VR-complexes. \label{fig:comparison}
	}
\end{figure}

Note that the comparison of the performance of the KID method with sparsification methods besides effective resistance sampling is for illustration only, since other methods do not aim to obtain spectrally close sparsifiers.
Nevertheless, we provide a comparison between the KID method, effective resistance sampling, uniform sampling of triangles and \( k\)-Neighbours approach (where one subsamples triangles to guarantee that every edge is up-adjacent to at most \( k \) triangles) in terms of the spectral distance \( \frac{1}{\lambda_{\max})} \| \Lu 1(\mc K) - \Lu 1(\mc L) \|_2\) in \Cref{tab:comparison}
Teh two latter methods are trivially generalized from the case of graphs, \cite{chen2023demystifying}.
As expected, only effective resistance and KID methods are able to achieve spectrally close sparsifiers for moderate numbers of sampled triangles
Note that we suggest uniform sampling with the reweighting technique from \Cref{thm:sparsify}, which implies that the sampled complex \( \mc L \) eventually converges to the original complex \( \mc K \) but requires a larger number of sampled triangles to obtain small spectral distances.
\begin{table}[t]
	\centering
	\caption{Comparison of Effective Resistance and KID approximations with trivial sparsification methods. Results are shown for VR-complexes generated from point clouds with \(m_0 = 50\) and \( \eps = 1.75 \).}
	\label{tab:comparison}
	\begin{tabular}{lccc}
		\hline
		                             & \( 0.1 q(m_k) \) & \( 0.2 q(m_k) \) & \( 0.33 q(m_k) \) \\ \hline
		uniform(unbiased)            & 1.0862           & 0.8634           & 0.5925            \\
		\(k\)-Neighbours             & 1.0              & 1.0              & 0.985             \\
		Effective Resistance         & 0.5824           & 0.2827           & 0.125             \\
		KID(ours), \( \delta = 0.1\) & 0.5996           & 0.3132           & 0.1385            \\
	\end{tabular}
\end{table}

Additionally, the performance of the direct computation of \(\b{p}\) for the largest considered point cloud with \(m_0 = 125\) highlights another important advantage of the KID approximation: reduced memory consumption.
Indeed, whether we use the definition of the resistance vector
\[
	\b{r}
	= \diag \Bigl( B_{k+1}^\top (\Lu{k})^\dagger B_{k+1} \Bigr)
\]
or the reformulation in terms of the right singular vector from \Cref{thm:GER_DOS}, a full SVD of \(\Lu{k}\) is required.
In the case of point clouds with \(m_0 = 125\), denser VR-complexes lead to real-valued matrices of size \(10^4 \times 10^4\), resulting in substantial memory demands for the SVD.
By contrast, the KID approximation avoids this decomposition and restricts the additional memory usage to storing Monte-Carlo matrices \(Z\) and their \texttt{matvecs} of dimension \(m_{k+1} \times N_z\), which is comparatively small.

\subsection{ Real-world data}
In addition to extensive benchmarking on synthetic VR-complexes, we also tested the KID approximation on real-life data provided in Austin Benson's collection of hypergraph data (\url{https://www.cs.cornell.edu/~arb/data/}).
For each hypergraph, we limited consideration to hyperedges of size at most \(10\); then, we performed simplicial closure to obtain simplicial complexes of order \( 2 \).

The results are summarized in \Cref{tab:real_data}: for each dataset, we report the average effective resistance (ER) and KID approximation times and spectral error over \(10\) runs.
Similarly to the case of VR-filtration, the KID approximation is significantly faster and introduces a moderate, but controllable approximation error.
Note, however, that for the largest considered simplicial complex \texttt{vegas-bars}, the memory usage of the direct GER approach prevents its application on a standard laptop while our novel method performs well.

\begin{table}[t]
	\centering
	\caption{Comparison of Effective Resistance (ER) and KID approximation times for various datasets; \( \delta = 0.1 \).}
	\label{tab:real_data}
	\begin{tabular}{lcccccc}
		\hline
		\multicolumn{1}{c}{\textbf{dataset}} & \multicolumn{1}{c}{\(m_1\)} & \multicolumn{1}{c}{\(m_2\)} & \multicolumn{1}{c}{\textbf{ER time}} & \multicolumn{1}{c}{\textbf{ER err}} & \textbf{KID time} & \textbf{KID err} \\ \hline
		email-enron                          & 1604                        & 6578                        & 2.9s                                 & 0.098                               & 0.31s             & 0.1402           \\
		contact-hs                           & 2515                        & 2370                        & 59.81s                               & 0.0962                              & 0.0013s           & 0.1311           \\
		NDC-classes                          & 2658                        & 3778                        & 7.53s                                & 0.0566                              & 0.246s            & 0.5628           \\
		vegas-bars                           & 4078                        & 5099                        & NA                                   & NA                                  & 0.1179s           & 0.1038           \\
		algebra                              & 1934                        & 3191                        & 2.221s                               & 0.0479                              & 0.433s            & 0.0504
	\end{tabular}
\end{table}

\section{ Discussion}
\label{sec:discussion}

In this work we have proposed a fast method of approximating generalized effective resistance vector for simplices of an arbitrary order \( k \) inside simplicial complex \( \mc K \) with the algorithmic complexity \( \mc O \left( \delta^{-3} m_k^{ 1 + \frac{4}{k+1}} \right)\), \Cref{thm:error}, allowing for efficient \( k\)-sparsification of \( \mc K \) through subsampling, \Cref{thm:sparsify}.
Our novel approach is based on the connection between generalized effective resistance vector \( \b r \) and the family of local density of states of the corresponding higher-order down-Laplacian operator \( \Ld {k+1}\), \Cref{thm:GER_DOS}.
We avoid problematic behaviour of the pre-existing KPM approximation methods for LDoS by suggesting a kernel-ignoring decomposition, which de facto decomposes a symmetrized spectral density via odd Chebyshev polynomials.
Our approach numerically follows our theoretical estimates, \Cref{thm:error} and Equation~\eqref{eq:err1}, for the approximation error.
This enables us to choose the number of moments \( M \) and number of Monte Carlo vectors \( N_z \), to control the final approximation error, which is shown to be sufficient at a moderate level for a close-to-efficient sparsification, \Cref{fig:perturb_measure}.
Given the fact that the developed method is only dependent on the upper Hodge Laplacian \( \Lu k \), it can also be directly applied to cell complexes.

Several applications may directly benefit from the proposed method.
Specifically, the introduction of a sparsified complex in label spreading, spectral clustering or generic simplicial complex GNN tasks may sufficiently decrease computational costs, \cite{yang2022simplicial,ebli2019notion}.

Additionally, the cost of trajectory classification as well as landmark detection algorithms for SCs can be directly scaled down by transitioning to a sparser, but spectrally similar model, \cite{grande2024topological}.
Separately, one may notice that the existence of the efficient sparsifier \( \mc L \) effectively bridges the gap between dense complexes and complexes for which one can obtain a preconditioner through a collapsible subcomplex \cite{savostianov2024cholesky}.

We remark that, while the developed method admits graph sparsification as a special case, the computational complexity for \( k = 0 \) would clearly exceed the copmutational complexity of preexisting graph sparsification algorthims. Note that we do not suggest the application of KID-approximations for the case of the  graphs, since it is not the focus of the current work.

Finally, it should be noted that the approximation quality given in \Cref{thm:error} may still be improved, e.g. by accounting for over-represented motifs in the graph (see \cite{dong2019network}) through similar filtration techniques.
Whilst we avoid explicitly defining a generalized motif for simplices of arbitrary order this appears to be a promising future venue of research.

\section*{Acknowledgments}
AS and MTS acknowledge funding by the Deutsche Forschungsgemeinschaft (DFG, German Research Foundation) -- Project number 442047500 through the Collaborative Research Center ``Sparsity and Singular Structures” (SFB 1481). MTS acknowledges funding by the European Union (ERC, HIGH-HOPeS, 101039827). Views and opinions expressed are however those of the author(s) only and do not necessarily reflect those of the European Union or the European Research Council Executive Agency. Neither the European Union nor the granting authority can be held responsible for them.

The research of NG was supported by funds from the Italian MUR (Ministero dell'Universit\'a e della Ricerca) within the PRIN 2022 Project ``Advanced numerical methods for time dependent parametric partial diﬀerential equations with applications'' and the PRO3 joint project entitled ``Calcolo scientifico per le scienze naturali, sociali e applicazioni: sviluppo metodologico e tecnologico''. FT is partially funded by the PRIN-MUR project MOLE code: 2022ZK5ME7 Principal MUR D.D. financing decree n. 20428 of November 6th, 2024, CUP B53C24006410006. FT and NG were supported from MUR-PRO3 grant STANDS and the PRIN-PNRR project FIN4GEO within the European Union's Next Generation EU framework, Mission 4, Component 2, CUP P2022BNB97.
FT and NG are members of INdAM GNCS (Gruppo Nazionale di Calcolo Scientifico).

\bibliographystyle{siamplain}
\bibliography{references}

\begin{thebibliography}{10}

\bibitem{ahn2012graph}
{\sc K.~J. Ahn, S.~Guha, and A.~McGregor}, {\em Graph sketches: sparsification,
  spanners, and subgraphs}, in Proceedings of the 31st ACM SIGMOD-SIGACT-SIGAI
  symposium on Principles of Database Systems, 2012, pp.~5--14.

\bibitem{barbarossa2020topological}
{\sc S.~Barbarossa and S.~Sardellitti}, {\em Topological signal processing over
  simplicial complexes}, IEEE Transactions on Signal Processing, 68 (2020),
  pp.~2992--3007.

\bibitem{batson2013spectral}
{\sc J.~Batson, D.~A. Spielman, N.~Srivastava, and S.-H. Teng}, {\em Spectral
  sparsification of graphs: theory and algorithms}, Communications of the ACM,
  56 (2013), pp.~87--94.

\bibitem{battiston2020networks}
{\sc F.~Battiston, G.~Cencetti, I.~Iacopini, V.~Latora, M.~Lucas, A.~Patania,
  J.-G. Young, and G.~Petri}, {\em Networks beyond pairwise interactions:
  Structure and dynamics}, Physics {R}eports, 874 (2020), pp.~1--92.

\bibitem{benczur1996approximating}
{\sc A.~A. Bencz{\'u}r and D.~R. Karger}, {\em Approximating st minimum cuts in
  ${\tilde o} (n^2)$ time}, in Proceedings of the twenty-eighth annual ACM
  symposium on Theory of computing, 1996, pp.~47--55.

\bibitem{benson2016higher}
{\sc A.~R. Benson, D.~F. Gleich, and J.~Leskovec}, {\em Higher-order
  organization of complex networks}, Science, 353 (2016), pp.~163--166.

\bibitem{bick2023higher}
{\sc C.~Bick, E.~Gross, H.~A. Harrington, and M.~T. Schaub}, {\em What are
  higher-order networks?}, SIAM review, 65 (2023), pp.~686--731.

\bibitem{chen2021unified}
{\sc T.~Chen, Y.~Sui, X.~Chen, A.~Zhang, and Z.~Wang}, {\em A unified lottery
  ticket hypothesis for graph neural networks}, in International conference on
  machine learning, PMLR, 2021, pp.~1695--1706.

\bibitem{chen2023demystifying}
{\sc Y.~Chen, H.~Ye, S.~Vedula, A.~Bronstein, R.~Dreslinski, T.~Mudge, and
  N.~Talati}, {\em Demystifying graph sparsification algorithms in graph
  properties preservation}, Proceedings of the VLDB Endowment, 17 (2023),
  pp.~427--440.

\bibitem{cohen2014solving2}
{\sc M.~B. Cohen, B.~T. Fasy, G.~L. Miller, A.~Nayyeri, R.~Peng, and
  N.~Walkington}, {\em Solving 1-laplacians in nearly linear time: Collapsing
  and expanding a topological ball}, in Proceedings of the Twenty-Fifth Annual
  ACM-SIAM Symposium on Discrete Algorithms, SIAM, 2014, pp.~204--216.

\bibitem{cohen2014solving}
{\sc M.~B. Cohen, R.~Kyng, G.~L. Miller, J.~W. Pachocki, R.~Peng, A.~B. Rao,
  and S.~C. Xu}, {\em Solving sdd linear systems in nearly m log1/2 n time}, in
  Proceedings of the forty-sixth annual ACM symposium on Theory of computing,
  2014, pp.~343--352.

\bibitem{djima2025power}
{\sc K.~Y. Djima and K.~M. Yim}, {\em Power spectrum signatures of graphs},
  arXiv preprint arXiv:2503.09660,  (2025).

\bibitem{dong2019network}
{\sc K.~Dong, A.~R. Benson, and D.~Bindel}, {\em Network density of states}, in
  Proceedings of the 25th ACM SIGKDD International Conference on Knowledge
  Discovery \& Data Mining, 2019, pp.~1152--1161.

\bibitem{ebli2020simplicial}
{\sc S.~Ebli, M.~Defferrard, and G.~Spreemann}, {\em Simplicial neural
  networks}, in TDA {\&} Beyond, 2020.

\bibitem{ebli2019notion}
{\sc S.~Ebli and G.~Spreemann}, {\em A notion of harmonic clustering in
  simplicial complexes}, in 2019 18th IEEE International Conference On Machine
  Learning And Applications (ICMLA), IEEE, 2019, pp.~1083--1090.

\bibitem{gambuzza2021stability}
{\sc L.~V. Gambuzza, F.~Di~Patti, L.~Gallo, S.~Lepri, M.~Romance, R.~Criado,
  M.~Frasca, V.~Latora, and S.~Boccaletti}, {\em Stability of synchronization
  in simplicial complexes}, Nature {C}ommunications, 12 (2021), p.~1255.

\bibitem{grande2024topological}
{\sc V.~P. Grande, J.~Hoppe, F.~Frantzen, and M.~T. Schaub}, {\em Topological
  trajectory classification and landmark inference on simplicial complexes}, in
  2024 58th Asilomar Conference on Signals, Systems, and Computers, IEEE, 2024,
  pp.~44--48.

\bibitem{grande2024node}
{\sc V.~P. Grande and M.~T. Schaub}, {\em Point-level topological
  representation learning on point clouds}, arXiv preprint arXiv:2406.02300,
  (2024).

\bibitem{guglielmi2023quantifying}
{\sc N.~Guglielmi, A.~Savostianov, and F.~Tudisco}, {\em Quantifying the
  {{Structural Stability}} of {{Simplicial Homology}}}, Journal of Scientific
  Computing, 97 (2023), p.~2.

\bibitem{hausmann1994vietoris}
{\sc J.-C. Hausmann et~al.}, {\em On the Vietoris-Rips complexes and a
  cohomology theory for metric spaces}, Universit{\'e} de Gen{\`e}ve-Section de
  math{\'e}matiques, 1994.

\bibitem{hutchinson1989stochastic}
{\sc M.~F. Hutchinson}, {\em A stochastic estimator of the trace of the
  influence matrix for laplacian smoothing splines}, Communications in
  Statistics-Simulation and Computation, 18 (1989), pp.~1059--1076.

\bibitem{kelner2013simple}
{\sc J.~A. Kelner, L.~Orecchia, A.~Sidford, and Z.~A. Zhu}, {\em A simple,
  combinatorial algorithm for solving sdd systems in nearly-linear time}, in
  Proceedings of the forty-fifth annual ACM symposium on Theory of computing,
  2013, pp.~911--920.

\bibitem{kyng2016approximate}
{\sc R.~Kyng and S.~Sachdeva}, {\em Approximate gaussian elimination for
  laplacians-fast, sparse, and simple}, in 2016 IEEE 57th Annual Symposium on
  Foundations of Computer Science (FOCS), IEEE, 2016, pp.~573--582.

\bibitem{li2022graph}
{\sc J.~Li, T.~Zhang, H.~Tian, S.~Jin, M.~Fardad, and R.~Zafarani}, {\em Graph
  sparsification with graph convolutional networks}, International Journal of
  Data Science and Analytics,  (2022), pp.~1--14.

\bibitem{Lim15}
{\sc L.-H. Lim}, {\em Hodge laplacians on graphs}, Siam Review, 62 (2020),
  pp.~685--715.

\bibitem{meyer2021hutch++}
{\sc R.~A. Meyer, C.~Musco, C.~Musco, and D.~P. Woodruff}, {\em Hutch++:
  Optimal stochastic trace estimation}, in Symposium on Simplicity in
  Algorithms (SOSA), SIAM, 2021, pp.~142--155.

\bibitem{munkres2018elements}
{\sc J.~R. Munkres}, {\em Elements of algebraic topology}, CRC press, 2018.

\bibitem{osting2017spectral}
{\sc B.~Osting, S.~Palande, and B.~Wang}, {\em Spectral sparsification of
  simplicial complexes for clustering and label propagation}, Journal of
  computational geometry, 11 (2022).

\bibitem{prokopchik2022nonlinear}
{\sc K.~Prokopchik, A.~R. Benson, and F.~Tudisco}, {\em Nonlinear feature
  diffusion on hypergraphs}, in International Conference on Machine Learning,
  PMLR, 2022, pp.~17945--17958.

\bibitem{quillen2006homotopical}
{\sc D.~G. Quillen}, {\em Homotopical algebra}, vol.~43, Springer, 2006.

\bibitem{roddenberry2021principled}
{\sc T.~M. Roddenberry, N.~Glaze, and S.~Segarra}, {\em Principled simplicial
  neural networks for trajectory prediction}, in International Conference on
  Machine Learning, PMLR, 2021, pp.~9020--9029.

\bibitem{rudelson1999random}
{\sc M.~Rudelson}, {\em Random vectors in the isotropic position}, Journal of
  Functional Analysis, 164 (1999), pp.~60--72.

\bibitem{satuluri2011local}
{\sc V.~Satuluri, S.~Parthasarathy, and Y.~Ruan}, {\em Local graph
  sparsification for scalable clustering}, in Proceedings of the 2011 ACM
  SIGMOD International Conference on Management of data, 2011, pp.~721--732.

\bibitem{savostianov2024cholesky}
{\sc A.~Savostianov, F.~Tudisco, and N.~Guglielmi}, {\em Cholesky-like
  preconditioner for hodge laplacians via heavy collapsible subcomplex}, SIAM
  Journal on Matrix Analysis and Applications, 45 (2024), pp.~1827--1849.

\bibitem{schaub2019random}
{\sc M.~T. Schaub, A.~R. Benson, P.~Horn, G.~Lippner, and A.~Jadbabaie}, {\em
  Random walks on simplicial complexes and the normalized {H}odge 1-{L}
  aplacian}, SIAM {R}eview, 62 (2020), pp.~353--391.

\bibitem{schaub2022signal}
{\sc M.~T. Schaub, J.-B. Seby, F.~Frantzen, T.~M. Roddenberry, Y.~Zhu, and
  S.~Segarra}, {\em Signal processing on simplicial complexes}, in Higher-Order
  Systems, Springer, 2022, pp.~301--328.

\bibitem{schaub2021signal}
{\sc M.~T. Schaub, Y.~Zhu, J.-B. Seby, T.~M. Roddenberry, and S.~Segarra}, {\em
  Signal processing on higher-order networks: Livin’on the edge... and
  beyond}, Signal Processing, 187 (2021), p.~108149.

\bibitem{soma2019spectral}
{\sc T.~Soma and Y.~Yoshida}, {\em Spectral sparsification of hypergraphs}, in
  Proceedings of the Thirtieth Annual ACM-SIAM Symposium on Discrete
  Algorithms, SIAM, 2019, pp.~2570--2581.

\bibitem{spielman2008graph}
{\sc D.~A. Spielman and N.~Srivastava}, {\em Graph sparsification by effective
  resistances}, in Proceedings of the fortieth annual ACM symposium on Theory
  of computing, 2008, pp.~563--568.

\bibitem{spielman2011spectral}
{\sc D.~A. Spielman and S.-H. Teng}, {\em Spectral sparsification of graphs},
  SIAM Journal on Computing, 40 (2011), pp.~981--1025.

\bibitem{spielman2014nearly}
{\sc D.~A. Spielman and S.-H. Teng}, {\em Nearly linear time algorithms for
  preconditioning and solving symmetric, diagonally dominant linear systems},
  SIAM Journal on Matrix Analysis and Applications, 35 (2014), pp.~835--885.

\bibitem{tetali1991random}
{\sc P.~Tetali}, {\em Random walks and the effective resistance of networks},
  Journal of Theoretical Probability, 4 (1991), pp.~101--109.

\bibitem{tudisco2021nonlinear}
{\sc F.~Tudisco, A.~R. Benson, and K.~Prokopchik}, {\em Nonlinear higher-order
  label spreading}, in Proceedings of the Web Conference 2021, 2021,
  pp.~2402--2413.

\bibitem{weisse2006kernel}
{\sc A.~Wei{\ss}e, G.~Wellein, A.~Alvermann, and H.~Fehske}, {\em The kernel
  polynomial method}, Reviews of modern physics, 78 (2006), pp.~275--306.

\bibitem{yang2022simplicial}
{\sc M.~Yang, E.~Isufi, and G.~Leus}, {\em Simplicial convolutional neural
  networks}, in ICASSP 2022-2022 IEEE International Conference on Acoustics,
  Speech and Signal Processing (ICASSP), IEEE, 2022, pp.~8847--8851.

\end{thebibliography}

\end{document}